\title{Safety Guarantees for Iterative Predictions with Gaussian Processes}
\author{Kyriakos Polymenakos, Luca Laurenti, Andrea Patane, Jan-Peter Calliess,\\
Luca Cardelli, Marta Kwiatkowska, Alessandro Abate \& Stephen Roberts}
\newtheorem{theorem}{Theorem}
\newtheorem{proposition}{Proposition}
\newtheorem{corollary}{Corollary}
\newtheorem{lemma}{Lemma}
\newcommand{\bi}[1]{\begin{itemize}}
\newcommand{\ei}[1]{\end{itemize}}
\date{March 2020}
\begin{document}

\maketitle
\begin{abstract}

Gaussian Processes (GPs) are widely employed in control and learning because of their principled treatment of uncertainty. 
However, tracking uncertainty for iterative, multi-step predictions in general leads to an analytically intractable problem.
While approximation methods exist, they do not come with guarantees, making it difficult to estimate their reliability and to trust their predictions. 
In this work, we derive formal probability error bounds for iterative predictions with GPs. 
Building on GP properties, we bound the probability that random trajectories lie in specific regions around the predicted values. 
Namely, given a tolerance $\epsilon > 0 $, we compute regions around the predicted trajectory values, such that GP trajectories are guaranteed to lie inside them with probability at least $1-\epsilon$.
We verify experimentally that our method tracks the predictive uncertainty correctly, even when current approximation techniques fail.
Furthermore, we show how the proposed bounds 
%can be employed within a safe reinforcement learning framework to verify the safety of candidate control policies, guiding the synthesis of provably safe controllers.
can incorporate a given control law, and effectively bound the trajectories of the closed-loop system.
\footnote{A shorter version of this paper was presented in the 59th Conference on Decision and Control (CDC 2020).}

\end{abstract}

\section{Introduction}

Gaussian processes (GPs) have been extensively used for modelling due to the variety of suitable properties they possess:
they are probabilistic models, providing uncertainty estimates on their predictions;
they are non-parametric, effectively adjusting the model complexity to the data, 
and finally they are usually data-efficient \cite{gpbook}.
In plenty of scenarios %, especially those with a temporal structure 
%, including planning, forecasting, and time-series modelling,
(e.g.\ planning, forecasting, and time-series modelling)
one needs to make several, possibly correlated, 
predictions at once (the second prediction is made before the first one can be evaluated versus a ground truth, and so on).
For this we can discern two options: either train multiple models, each one predicting at different time-scales, 
%or use a single model, that provides all predictions, basing each new one at the predictions already made.
or use a single model, that iteratively computes predictions that get in turn fed back as input to the model in the next step. 
%that provides all predictions, basing each new one at the predictions already made.
%We refer to the second procedure, making multi-step ahead predictions by iteratively using each previous prediction as an input,
We refer to the latter as \emph{iterative predictions} and \emph{iterative planning}.
%
%However, these GP predictions are probability \emph{distributions}
%while standard GPs take \emph{points} as inputs.
%Performing GP predictions with a distribution in the input, a \emph{noisy input} in other words, is an analytically intractable problem.
%Moment matching (MM) is an approximate method, that allows, for certain types of GPs, 
%to exactly calculate the mean and variance of the predictive posterior.
%Moment matching approximates the unknown distribution with a Gaussian,
%whose first two moments \emph{match} those of the posterior.
%

Of particular interest for the iterative planning scenario is the model-based reinforcement learning setting,
where a GP model is used to evaluate a candidate control policy on the  system. 
The evaluation requires the model to provide predictions for the system's state
over multiple time-steps under the proposed policy. 
It is important in these cases to have a realistic assessment of the 
error on the predictions,
% to quantify the risk of costly system failures like
as this allows quantification of the risk of costly system failures, like
collisions with obstacles or financial losses, and analysis of safety-critical applications.
%In such a setting, we want to provide predictions not only accurate on average,
In such settings, we require predictions that are not only accurate on average,
but also provide robust, (probabilistically) guaranteed worst-case accuracy.

Unfortunately, as GP models output probability distributions, iterative planning poses the problem of prediction over successive \textit{noisy inputs} (i.e.\ with a distribution placed over the input space).
This leads to an analytically intractable problem for such non-linear input-output mappings.
While several approximation techniques have been proposed \cite{girard2003gaussian,Vinogradska2018}, to the best of our knowledge, none of them provides guarantees, in the form of formal error bounds on their estimations, making it difficult to estimate reliability and trust predictions in application scenarios.

In this work we provide a probabilistic bound for iterative predictions with GPs and develop a method for its explicit computation.
Given a user-defined tolerance $\epsilon > 0$, our method works by computing probabilistic bounds at each prediction step and propagating them over multiple time-steps in the form of intervals. The GP trajectories are guaranteed to lie inside these intervals at each time step with probability at least $1 - \epsilon$.  
%Our method propagates a probabilistic bound for multiple time-steps. 
In practice, this allows us to perform long-term predictions for the GP trajectory 
with the prediction provably staying within known bounds with a specified probability.
We further show how the bound can be used within a reinforcement learning scenario, %, similar to that of Safe PILCO \cite{polymenakos2019safe}
in order to guarantee the safety of proposed control policies.
We provide an algorithmic framework for the explicit computation of every value involved in the bound calculation, directly and efficiently from data, so that the bound can be computed independently of the form of the learned GP.

%We assume prior knowledge of the hyperparameters of the Gaussian process, but make no further assumptions
%(for example on the Lipshitz constant, which other methods do), 
%and every other quantity involved is calculated from the data, efficiently and to high precision. 

%In the following we show how our method can correctly quantify prediction errors even when moment matching fails, and 
%how it can be used in a reinforcement learning context similar to Safe PILCO \cite{polymenakos2019safe}, to train control policies 
%with formal guarantees.

On a set of case studies, we show how our method can correctly provide probabilistic bounds that account for the GP uncertainty over its trajectories. %\ap{The last sentence used to say "prediction error" which sounds like we are modelling the error between the model and the GP, which we don't! I have rephrased it}
Finally, we illustrate how our bound can be successfully employed to verify both open loop and feedback policies and therefore guarantee the safety of proposed controllers for the learned GP. In summary, the paper makes the following main contributions:
\begin{itemize}
    \item We develop a formal bound, for iterative prediction settings, on the probability that the trajectories of a GP lie inside a specific region. We provide explicit computational techniques for calculating the bound.
    \item We incorporate control laws and take into account their effects in the model's trajectories.
    \item We provide experimental validations of our method, highlighting cases in which a competitive state-of-the-art method fails to properly propagate the GP uncertainty. We provide case studies on certification of open-loop and feedback policies.
\end{itemize}

\section{Related Work}
Performing iterative predictions, and using them for planning, is an extensively studied problem across various model types \cite{ale_survey,green2012linear,kahn17,vuong2019uncertainty}.

%Probabilistic model checking and verification also examine the forward dynamics of stochastic systems
%and provide probabilistic guarantees of their properties \cite{ale_survey}.
%Similarly does robust control \cite{green2012linear}, although focusing on controller design.
%In \cite{kahn17}, \cite{chua2018deep}, and \cite{vuong2019uncertainty}, neural networks are used as models, with dropout
%or ensembles of networks capturing uncertainty.
\sloppy
In particular, GP multi-step-ahead prediction is generally achieved using heuristic approximations \cite{girard2003gaussian}.
The most widely used approach is Moment Matching (MM) which computes a Gaussian approximation over the (non-Gaussian) output distribution of a GP for a noisy input \cite{girard2003gaussian,candela2003propagation}.
The uncertainty estimated in this fashion can then be leveraged to learn control policies in frameworks such as PILCO (Probabilistic Inference for Learning COntrol) \cite{pilco,pil_thesis}. 
%and the one discussed in \cite{mchutchon2015thesis} -
%for data-efficient RL.
During the last few years, various extensions of PILCO have been proposed \cite{pilco2,deisenroth2014multi,kupcsik2013data,mcallister2017data}. 
For example, in \cite{pilconn}  GPs have been replaced with neural networks, while in 
%moving to partially observable settings \cite{mcallister2016data},
%changing the optimisation process and relaxing important assumptions \cite{chatzilygeroudis2017black} \ap{what important assumptions?} and
\cite{polymenakos2019safe} an emphasis on safety is given. 
However, building on Gaussian approximations, all the cited  approaches inherently 
fail to take into account multi-modal behaviour and tend to underestimate uncertainty.
As such, the synthesised policies are not guaranteed to be safe.
Our method on the other hand comes with probabilistic guarantees that allow us to compute the subregions of the input space in which the trajectories of the analysed GP are bound to lie with high probability.
As such it provides formal, guaranteed bounds on the GP trajectories and makes no particular assumptions on the GP model, enabling its use in safe reinforcement learning scenarios \cite{safe_survey}.

Numerical approximations exist for multi-step-ahead predictions \cite{Vinogradska2018} where the output distribution is directly approximated by using quadrature formulas and, in principle, worst-case scenario error bounds could be computed using existing techniques for numerical quadrature \cite{vinogradska2016stability}. 
%\KP{Added this:}
However, the analysis that leads to the bounds proposed in \cite{vinogradska2016stability} is focused on 
stability, with the assumption that trajectories monotonically decrease the distance to a target state, 
and the authors explicitly exclude trajectories that move away from the target state before eventual convergence and stabilisation. In \cite{Vinogradska2018}, where more general tasks are solved, no formal bounds are provided. Our algorithm instead provides valid probabilistic bounds for the general case.

Interestingly, \cite{Koller2018} focus on bounding the modelling error, that is the difference between the underlying system dynamics and the learnt GP model, which is a complementary problem to the one tackled in this work, and employ moment-matching to propagate the uncertainty for multiple time-steps. 
In order to compute error bounds they assume that the underlying function describing the system dynamics, that is approximated by the GP, has a bounded RKHS norm and use existing results for this setting \cite{srinivas2012information}.  
However, their bounds require the computation of constants very difficult to compute in practice. %\ap{I am not sure this previous sentence is needed - I mean describing that it is just a different problem that they solve should be enough} \KP{Just reviewers keep asking about this otherwise I agree. We can take it out if we run out of space.}
In contrast, in this paper we assume that the underlying function is a sample from a GP (and hence we do not consider any possible model mismatch) and derive formal bounds whose required constants are directly computed. %\LL{CHeck this. I tired to make the comparision crisper}.

%\LL{Here, we may say something about the requirement of our bound}
%
%In \cite{Koller2018} the authors also propose an iterative bound for the predicted trajectory. 
%Their method, assumes an underlying function with a low RKHS dimension/degree, and using that,
%bound the modelling error. They make additional assumptions on the smoothness of function, 
%and use the Taylor series version of moment matching, along with linearisation of the model for uncertainty propagation.
%Our problem settings have similarities and the two methods could be complementary.
%
%The problem of planning iteratively with a learnt model while keeping track of the uncertainty 
%is not only applicable to GPs of course--accurately tracking the compounding of prediction errors is a common problem across model types.
%Probabilistic model checking and verification also examine the forward dynamics of stochastic systems
%and provide probabilistic guarantees of their properties \cite{ale_survey}.
%Similarly does robust control \cite{green2012linear}, although focusing on controller design.
%In \cite{kahn17}, \cite{chua2018deep}, and \cite{vuong2019uncertainty}, neural networks are used as models, with dropout
%or ensembles of networks capturing uncertainty.
%To propagate uncertainty over multiple time steps, trajectory sampling is used online.

%\LL{Not sure the following is even needed. At the end, it is clear the two problems are very different. Not sure we even need to discuss it in the related works}
Formal and probabilistic guarantees for GPs have been discussed in \cite{cardelli2018robustness} and \cite{blaas2019robustness} for regression and classification with GPs, respectively. 
%\ap{Maybe we can cite Grosse as well here?}
Albeit formal, these methods cannot be directly applied to multi-step-ahead predictions scenarios as they are designed for GPs over single input points. 
Whereas, our method, by propagating  probabilistic bounds through each time step is applicable to multi-step ahead prediction scenarios and can be used in reinforcement learning settings to verify controller safety.

%The bound we are proposing was first introduced in \cite{cardelli2018robustness}, and extended for classification in \cite{blaas2019robustness}. 
%Here we show how it can be propagated over multiple time-steps, and we extend it so that it takes into account
%feedback control policies.
%Relative to the state of the art, our method provides strong theoretical guarantees, for a rich, expressive class of models.

\section{Bounds for Multi-step Ahead Predictions with Gaussian Processes}
%\ap{Maybe spend a few more words for introducing notation, as we have space?}
%\KP{Should we make all x's bold face except for when we talking about a single component of an x vector? as we do later with $x^1$, $x^2$ etc.}\LL{Let's keep bold for random variables, otherwise it is confusing and some equations do not hold othrwise.} \KP{ah ok, also some reviewer asked about that, I'll go over it tomorrow}
Given  an input space $U \subset \mathbb{R}^m$ and a time horizon $[0,H]$, for $t \in \{0,\ldots,H-1\}\subset \mathbb{N}$ we consider a stochastic dynamical system\footnote{Throughout the paper bold math symbols are used for random variables.} 
\begin{align}
         \label{Eqn:OriginalProcess} \mathbf{x}_{t+1}= \mathbf{f}(\mathbf{x}_t,u_t), \, \, \, u_t \in U,
     \end{align} 
     %\ap{Careful, both U and X are m-dimensional}
where we assume that for $\mathrm{x}\in X \subset \mathbb{R}^n$, $\mathbf{f}(\mathrm{x},u_t) \sim \mathcal{N}(\mu_{\mathrm{x}}^{\mathbf{f}},\Sigma_{\mathrm{x},\mathrm{x}}^{\mathbf{f}})$ that is, $\mathbf{f}(\mathrm{x},u_t)$ is normally distributed  with mean  vector $\mu_{\mathrm{x}}^{\mathbf{f}}$  
and covariance matrix $\Sigma_{\mathrm{x},\mathrm{x}}^{\mathbf{f}}$\footnote{For simplicity we drop the dependence on $u_t$ in both mean vector and covariance matrix.}. 
Mean and variance of  $\mathbf{f}^i(\mathrm{x},u_t)$, the $i$-th component of $\mathbf{f}(\mathrm{x},u_t)$, are denoted with $\mu_{\mathrm{x}}^{\mathbf{f},i}$ and $\Sigma_{\mathrm{x},\mathrm{x}}^{\mathbf{f},(i,i)}$.
%As a consequence, $\mathbf{x}_t$ is a controlled stochastic process that takes values in  $X$.  %, such that $\Sigma_{x_1,x_2}^{i,j}$ is the covariance between the component $i$ of $\bold{f}(x_1,u_t)$
%and the component $j$ of $\mathbf{f}(x_1,u_t)$.
%A \begin{it}trajectory\end{it} of System \eqref{Eqn:OriginalProcess} is denoted as  $\omega = \{x_0,...,x_H\},$  where for $i \in \{0,H \},$ $x_i \in X$ and represents the sequence of states visited by an execution of the system. %The space of all trajectories of System \eqref{Eqn:OriginalProcess} is called $X^H$.
Intuitively, $\mathbf{x}_t$ is a discrete-time stochastic process, whose time evolution depends on an input signal taking values in $U.$
A parametric memory-less and deterministic policy $ \pi^{\theta} : X \rightarrow U $ with parameters $\theta$ is a function that assigns a control input given the current state. %In the next section we show how $\pi$ can be synthesized in order to maximize a given objective function, while guaranteeing that $\mathbf{x}_t$ remains safe at each time step, i.e.\ it remains in a given safe region of the state space $S \subseteq X$ with high probability. %\LL{Re-adjust when we have the final version of the next Section}
By iterating Eqn. \eqref{Eqn:OriginalProcess}, we have that, for $t > 0$, $\mathbf{x}_{t}$ is a random variable as it is the output of process $\mathbf{f}$.
As such multi-step ahead predictions poses the problem of predictions over noisy inputs.

%Given a safe set $S \subset X$ and a reward function $r : X  \rightarrow \mathbb{R}$ our goal is to design a memoryless policy, $ \pi : X \rightarrow U $, with parameters $\theta$\LL{Need to explain what is theta}, 
%that maximizes the expected total reward over time horizon $[1,...,H]$, 
%while $x$ remains in $S$ with high probability.

\subsection{Prediction over noisy inputs} 
%\KP{Made some changes here, check if it's any clearer and that notation is consistent}
For a given $\mathrm{x} \in X,u \in U$ we have that $\mathbf{f}(\mathrm{x},u)$ is a Gaussian random variable.
However, if $\mathbf{x}_t$ is a random variable itself (which is the case for prediction over noisy inputs), then $\mathbf{f}(\mathbf{x}_t,u)$ is generally not Gaussian anymore and its distribution is in general analytically intractable. 
In particular, we have that
 $$ \mathbf{f}(\mathbf{x}_t,u) \sim \int p(\mathbf{x}_{t+1} | \mathrm{x},u) p(\mathbf{x}_t=\mathrm{x}) d{x},$$
where $p(\mathbf{x}_{t+1} | {x},u)$ is the (normal) distribution of $\mathbf{f}(x,u) $ and $p(\mathbf{x}_t=x)$ is the distribution of $\mathbf{x}_t$.
%For a Gaussian distributed $\mathbf{x}_t \sim \mathcal{N}(\cdot| \mu_{x_t}, \Sigma_{x_t})$, we have to integrate over the values of $\mathbf{x}_t$:  \ap{?}
% $$\mathbf{f}(\mathbf{x},u)\sim \int \mathcal{N}(\cdot | \mu_{x}^{\mathbf{f}},\Sigma_{x,x}^{\mathbf{f}}) p(\mathbf{x}=x)dx,  $$
%$$\mathbf{f}( \mathbf{x_t}, u_t) \sim p(\mathbf{x}_{t+1} | \mathbf{\mu}_{x_t}, \mathbf{\Sigma}_{x_t} ) = \int p(\mathbf{x}_{t+1} | \mathbf{x}_t) p(\mathbf{x}_t) d\mathbf{x}_t, $$
%where we dropped dependencies on $u_t$ since it is fully determined by $\mathbf{x}_t$.
As a consequence, the predictive distribution for $\mathbf{\mathbf{x}}_{t+1}$ is not Gaussian and approximations are required  \cite{girard2003gaussian}.
%In this section, we show that a bound on probability that a trajectory $\mathbf{x}_t \in X^H$ is in $S\subset X$   (i.w., it is safe) can be computed iteratively by relying on properties of Gaussian processes. 

In this paper, given $\mathbf{x}_t$, we consider a predictor $\hat {x}_t$ for $\mathbf{x}_t$, such that
%\ap{I think that in the moment matching case the prediction is done on the approximated random variable. Shall we write the full $\mathbf{x_t}$ instead of its predictor as input to the predictor function?}
\begin{align}
    \label{Eqn:EstimatorGP}
    %\hat{\mathrm{x}}_{t+1}=\mu(\bold{f}(\hat{\mathrm{x}}_t,u_t)),
    \hat{\mathrm{x}}_{t}=g(\hat{\mathrm{x}}_{t-1},u_{t-1}), 
\end{align}
where $g(\hat{\mathrm{x}}_{t-1},u_{t-1})$ is a deterministic function. 
That is, $\hat x_t$ is a deterministic process that predicts the value of $\mathbf{x}_t$.
For instance, we could have that $\hat{\mathrm{x}}_t$ equals the mean of $\mathbf{x}_t$, as estimated with moment matching techniques \cite{girard2003gaussian}, but any other deterministic function will work for the results presented in this paper.

In what follows, in Theorem \ref{Prop:BoundErrorRecursive} we compute a probabilistic bound on the error between $\hat x_t$ and $\mathbf{x}_t$.
The bound has a recursive structure, as the uncertainty needs to be propagated over multiple prediction steps.
Please note that is not a modelling error, coming from the GP imperfectly capturing the behaviour of an underlying system, 
but comes solely from propagating the uncertainty while performing iterative predictions.
Then, in Corollary \ref{Corollary:TestTUbeTraectory} we show that, given an $\epsilon>0,$ this bound can be used to build a tube around $\hat x_t$ such that at each time step the trajectories of  $\mathbf{x}_t$ are guaranteed to be within such a tube with probability at least $1-\epsilon.$  
For any safe region $S \subset X$ we can hence produce certificates on whether GP trajectories will lie inside that region with high probability or not. 
%\acmt{I would already say now that the provided bound is recursive in time (this is also discussed after the corollary).}
%\ap{I Added this last sentence for clarity, Luca let me know if you agree with it.} 
%\ap{Consider highlighting a bit the logic behind the bound and how it works, and the way the rest of the section is organised. From now on is a bunch of theorem, so if anybody wants to just skip that, it would be great if it had an understanding of the thing already by now. It might be a good place to describe the whole safety region vs. predictor error kind of thing.}
%\LL{Note that $\hat x_t$ is not necessarily the mean of $x_t$, but it is fine or us. Alternatively, we can take the mean given by moment matching. It does not matter for the bound.}\ap{Do as you think is best, even in the most general form I think it would not make the notation much harder/different, so why not? }\LL{ agreed.}

%As $\hat x_t$ is a deterministic function, this introduces a prediction error that needs to be quantified. 
%As a consequence, in this paper, we are interested in quantifying the error between $x_t$ and $\hat x_{t}$. \LL{We need to exaplin why this notion of error is important and it makes sense}
%Note that this is not straightforward as, although $f$ is drawn from a GP, $x_t$ for $t>1$ is not a trajectory from a Gaussian process.
%\subsubsection{Constraints}

\subsection{Bounds for Multi-Step Ahead Predictions}
%\ap{We are implicitly assuming that we can compute the CDF of $x_0$. Actually the theorem implicitly assume that $x_0$ is Gaussian. We need to say this somewhere.}
%As discussed above, in Theorem \ref{Prop:BoundErrorRecursive} we show that  we can bound the probability that the error between $\mathbf{x}_t$ and $\hat x_t$ is greater than any threshold $K_t>0$. That is, 
Consider the random variable on the error 
%\ap{Rephrase the error name if possible?} \KP{repeated that we are not talking about modelling error in the previous paragraph}
at time $t$, i.e.\ $\mathbf{e}_t=| \mathbf{x}_t - \hat x_t |_1$ and a constant $K_t>0$. In Theorem \ref{Prop:BoundErrorRecursive} we compute $P( \mathbf{e}_t > K_t  ),$
that is the probability that the error between $\mathbf{x}_t$ and $\hat x_t$ is greater than $K_t$.
%\JC{define P or use $\mathbb P$ or say 'probability'}
%\SR{have added 'probability' in line above to make clear}
%More precisely, in Theorem \ref{Prop:BoundErrorRecursive}, we show that such a probability can be derived recursively by knowing the error at the previous time steps. 
%It is clear that if we take $\delta$ to be the smallest distance between a point not in $S$ and $\hat x_t$, then $P( \mathbf{e}_t > \delta  )$ also bounds the probability that $x_t \in S$. \ap{I found this a bit confusing. First we talk about safety, than we talk about prediction errors, than we talk about belonging to S again. I understand that the things are basically the same thing, but I think we should be a bit more clear, and define S clearly.}
\noindent
\begin{theorem}
\label{Prop:BoundErrorRecursive}
%Call $\hat \mu_t=\mu(\bold{f}(\hat{\mathrm{x}}_t,u_t)).$
For any $K>0$ and ${\mathrm{x}^*}\in X$, let $I_{ x*}^K=\{ \mathrm{x} \in X \, : \, |{x}^* - \mathrm{x}|_1\leq K\}$. %\JC{In definitions of sets I am unfamiliar with the s.t. notation. I would use : or $\vert$}.
Assume $\mathbf{x}_0 \sim \mathcal{N}(\mu_{0},\Sigma_{0,0})$.
Then, for arbitrary constants $K_{t+1},K_{t}>0$, it holds that
\begin{align*}
  P(\mathbf{e}_{t+1}> K_{t+1}) \,  \leq&\,
  P(\sup_{x \in I_{\hat x_{t}}^{K_{t}} }|\hat x_{t+1}- \mathbf{f}(\mathrm{x},u_{t}) |_i>{K_{t+1}})\\ & P(\mathbf{e}_{t} \leq K_t)
  +\, P(\mathbf{e}_{t}>K_t), 
\end{align*}  
with
$ P(\mathbf{e}_0 > K_0 )=1- \int_{I^{K_0}_{\hat x_0}} \mathcal{N}(z\,|\,\mu_{0},\Sigma_{0,0}) dz$
for any $K_0>0$, $\mu_{0}$ and $\Sigma_{0,0}$ are the mean and covariance of $\mathbf{x}_0$.%\LL{Let's consider the general case of non-zero initial variance.}
\end{theorem}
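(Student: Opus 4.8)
The plan is to decompose on the event that the error at time $t$ stays within its tolerance and, on that event, to replace the random input $\mathbf{x}_t$ by a worst case over the set $I_{\hat x_t}^{K_t}$. Throughout I use that the predictor process $\hat x_t = g(\hat x_{t-1},u_{t-1})$ is deterministic, so $\hat x_{t+1}$ is a fixed point and $I_{\hat x_t}^{K_t}$ is a fixed subset of $X$, and that $\{\mathbf{e}_t \le K_t\} = \{\mathbf{x}_t \in I_{\hat x_t}^{K_t}\}$ by the definitions of $\mathbf{e}_t$ and $I_{\hat x_t}^{K_t}$. First I would apply the law of total probability,
\begin{align*}
P(\mathbf{e}_{t+1} > K_{t+1}) ={}& P(\mathbf{e}_{t+1} > K_{t+1}\mid \mathbf{e}_t \le K_t)\,P(\mathbf{e}_t \le K_t)\\ &{}+ P(\mathbf{e}_{t+1} > K_{t+1}\mid \mathbf{e}_t > K_t)\,P(\mathbf{e}_t > K_t),
\end{align*}
and bound the last conditional probability crudely by $1$, which leaves the additive term $P(\mathbf{e}_t > K_t)$. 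It then suffices to show that $P(\mathbf{e}_{t+1} > K_{t+1}\mid \mathbf{e}_t \le K_t) \le P(\sup_{x\in I_{\hat x_t}^{K_t}}|\hat x_{t+1}-\mathbf{f}(x,u_t)|_1 > K_{t+1})$.

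For this inequality I would write $\mathbf{e}_{t+1} = |\mathbf{x}_{t+1}-\hat x_{t+1}|_1 = |\mathbf{f}(\mathbf{x}_t,u_t)-\hat x_{t+1}|_1$ and condition further on $\mathbf{x}_t = \mathrm{x}$ for $\mathrm{x}\in I_{\hat x_t}^{K_t}$. Here I invoke the modelling assumption behind \eqref{Eqn:OriginalProcess}, made explicit in the identity $\mathbf{f}(\mathbf{x}_t,u)\sim\int p(\mathbf{x}_{t+1}\mid x,u)\,p(\mathbf{x}_t = x)\,dx$, namely that the GP map $\mathbf{f}(\cdot,u_t)$ applied at step $t$ is independent of $\mathbf{x}_t$; this yields $P(\mathbf{e}_{t+1} > K_{t+1}\mid \mathbf{x}_t = \mathrm{x}) = P(|\mathbf{f}(\mathrm{x},u_t)-\hat x_{t+1}|_1 > K_{t+1})$. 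The key pointwise step is that for each fixed $\mathrm{x}\in I_{\hat x_t}^{K_t}$ and each realisation of $\mathbf{f}(\cdot,u_t)$ one has $|\mathbf{f}(\mathrm{x},u_t)-\hat x_{t+1}|_1 \le \sup_{x'\in I_{\hat x_t}^{K_t}}|\mathbf{f}(x',u_t)-\hat x_{t+1}|_1$, so that $\{|\mathbf{f}(\mathrm{x},u_t)-\hat x_{t+1}|_1 > K_{t+1}\}\subseteq\{\sup_{x'\in I_{\hat x_t}^{K_t}}|\mathbf{f}(x',u_t)-\hat x_{t+1}|_1 > K_{t+1}\}$ and hence $P(|\mathbf{f}(\mathrm{x},u_t)-\hat x_{t+1}|_1 > K_{t+1}) \le P(\sup_{x'\in I_{\hat x_t}^{K_t}}|\mathbf{f}(x',u_t)-\hat x_{t+1}|_1 > K_{t+1})$. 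Since this upper bound does not depend on $\mathrm{x}$, integrating against the law of $\mathbf{x}_t$ conditioned on $\{\mathbf{x}_t\in I_{\hat x_t}^{K_t}\}$ (a probability measure supported on that set) preserves the inequality, giving the required bound on $P(\mathbf{e}_{t+1} > K_{t+1}\mid \mathbf{e}_t \le K_t)$. Substituting this into the total-probability decomposition yields the claimed recursion.

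The base case is immediate: $\mathbf{e}_0 = |\mathbf{x}_0-\hat x_0|_1$, so $P(\mathbf{e}_0 > K_0) = P(\mathbf{x}_0\notin I_{\hat x_0}^{K_0}) = 1-\int_{I_{\hat x_0}^{K_0}}\mathcal{N}(z\mid\mu_0,\Sigma_{0,0})\,dz$, using $\mathbf{x}_0\sim\mathcal{N}(\mu_0,\Sigma_{0,0})$.

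The step I expect to be the main obstacle is making the decoupling rigorous: one has to justify carefully that the GP evaluation at step $t$ is independent of $\mathbf{x}_t$ --- the ``fresh independent sample per step'' reading of iterative GP prediction that is implicit in the model --- and that $\sup_{x'\in I_{\hat x_t}^{K_t}}|\mathbf{f}(x',u_t)-\hat x_{t+1}|_1$ is a genuine measurable random variable. The latter is fine since GP sample paths can be taken continuous on the compact set $I_{\hat x_t}^{K_t}$, so the supremum coincides with one over a countable dense subset. Once these points are settled, the remaining steps are elementary conditioning and monotonicity.
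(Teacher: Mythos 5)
Your proposal is correct and follows essentially the same route as the paper: the same decomposition over the events $\{\mathbf{e}_t\le K_t\}$ and $\{\mathbf{e}_t> K_t\}$ (bounding the latter conditional by $1$), followed by replacing the evaluation at the random input by the supremum over $I_{\hat x_t}^{K_t}$, which is exactly the content of the paper's Lemma \ref{Lemma:supProbab} that you have simply inlined. If anything, you are more explicit than the paper about the one genuinely delicate point --- that passing from the conditional probability of the supremum event to its unconditional probability requires the ``fresh GP draw at step $t$ is independent of $\mathbf{x}_t$'' reading of the model --- which the paper's lemma glosses over.
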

The proof of the above theorem is reported in Section \ref{Sec:proofs}.
The resulting bound in Theorem \ref{Prop:BoundErrorRecursive} is recursive. 
Hence, in order to estimate the prediction error at time $t$, we need to compute the prediction error at the previous time steps, which is propagated over time through the bound.
The recursion terminates as the distribution for $\mathbf{x_0}$, that is the initial condition, is given.
Intuitively $K_t$ is a parametric cutoff threshold for the distance at time $t$, and the resulting bound at time $t+1$, that is $\mathbf{e}_{t+1}$, is the sum of the contribution given by assuming that $\mathbf{e}_t \leq K_t$ and by the contribution when assuming $\mathbf{e}_t > K_t$ (and remains valid for any value of $K_t$).

Note that the bound in Theorem \ref{Prop:BoundErrorRecursive} requires the computation of 
$P(\sup_{{\mathrm{x}} \in I_{\hat {\mathrm{x}}_{t}}^{K_t} }$ $|g(\hat {\mathrm{x}}_t,u_t)-\mathbf{f}({\mathrm{x}},u_{t}) |_1>{K_{t+1}})$, that is the probability that the supremum of a stochastic process is greater than a given threshold.
This is in general a difficult problem \cite{adler2009random}. However,  $\mathbf{f}({\mathrm{x}},u_{t})$ is a Gaussian process and $g(\hat {\mathrm{x}}_t,u_t)$ a constant. Therefore, we can use the result from \cite{cardelli2018robustness}, where bounds for the supremum of a GP have been derived.
These are extended to the current setup in the following proposition. 
\begin{proposition}
\label{Prop:SupremumGP}%\ap{Why index $i$ dropped in some of the bits here?}
Let $\mu({\mathrm{x}},\hat {\mathrm{x}}_t)=g(\hat {\mathrm{x}}_t,u_t)-\mu_{\mathrm{x}}^{\mathbf{f}}$.
Assume $I_{\hat {\mathrm{x}}_{t}}^{K_t}$ is a hyper-cube with side length $D$.
 For  $i \in \{1,...,n\}$ let
    \begin{align*}
      \bar \eta^i =  & \frac{K_{t+1}\, -\, \sup_{{\mathrm{x}} \in I_{\hat {\mathrm{x}}_{t}}^{K_t}}|\mu({\mathrm{x}},\hat {\mathrm{x}}_{t})|_1}{n}\,   - \\
        & \quad \quad \quad \quad \quad  12 \int_{0}^{\lambda^i} \sqrt{\ln \left(\big( \frac{\sqrt{N} L^{i}_{{\hat {\mathrm{x}}_t}} D\, }{ z}+ 1\big)^n  \right)}dz,
    \end{align*}
    %\SR{what is $n$ in the above?}
    with $\lambda^i = \frac{1}{2}\sup_{{\mathrm{x}}_1,{\mathrm{x}}_2 \in I_{\hat {\mathrm{x}}_{t}}}^{K_t} d^{(i)}_{{\hat {\mathrm{x}}}}({\mathrm{x}}_1,{\mathrm{x}}_2)$ and $n$ being the dimension of the state space.
    % $\bar{\epsilon} = \epsilon_{q_i}-L$ assume 
   For each $i \in \{1,...,n\}$ assume $\bar \eta^i > 0$. Then, it holds that
    \begin{equation*}
        % \min_{x_1 \in q_i}\contKernel(\h\mid &q_j,x_1,a)\geq \\
        % &1-2 \exp^{-\frac{\left(\epsilon_{q_i} - L - 12 \int_{0}^{\frac{K_d^{\bridge} \h}{2}}\sqrt{\frac{ K_{d}^{\bridge}\h}{2 x}}\ d x  \right)^2}{2\sigma_I^2}},
       P(\sup_{\mathrm{x} \in I_{\hat {\mathrm{x}}_{t}}^{K_t} }|g(\hat {\mathrm{x}}_t,u_t)-\mathbf{f}(\mathrm{x},u_t) |_1>{K_{t+1}}) \leq 2 \sum_{i=1}^{n} e^{ -\frac{(\bar \eta^i)^2}{2 \xi^{(i)} }},
    \end{equation*}
    where $
    \xi^{(i)} = \sup_{{\mathrm{x}}\in  I_{\hat {\mathrm{x}}_{t}}^{K_t}} \Sigma^{\mathbf{f},(i,i)}_{{{\mathrm{x}}},{\mathrm{x}}}, $
    \begin{align*}
    &d^{(i)}_{{\hat {\mathrm{x}}_t}}({\mathrm{x}}_1,{\mathrm{x}}_2)=\sqrt{\mathbb{E}[( \mathbf{f}^{i}({\mathrm{x}}_2,u_t)-\mu^{\mathbf{f},i}_{{\mathrm{x}}_2} -(\mathbf{f}^{i}({\mathrm{x}}_1,u_t)-\mu^{\mathbf{f},i}_{{\mathrm{x}}_1}))^2]}
     \end{align*}   
    and $L_{\hat {\mathrm{x}}_t}^{i}$ is a local Lipschitz constant for $d^{(i)}_{\hat {\mathrm{x}}_t}.$
\end{proposition}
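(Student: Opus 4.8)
The plan is to collapse the vector-valued $\ell_1$ supremum into $n$ scalar tail bounds for suprema of centred Gaussian processes, and then apply concentration of measure for Gaussian suprema. First I would set $\mathbf{w}^i(\mathrm{x}) := \mathbf{f}^i(\mathrm{x},u_t) - \mu^{\mathbf{f},i}_{\mathrm{x}}$, so that each $\mathbf{w}^i$ is a zero-mean Gaussian process on the hyper-cube $I_{\hat{\mathrm{x}}_t}^{K_t}$ (Gaussian because $\mathrm{x}\mapsto\mathbf{f}^i(\mathrm{x},u_t)$ is a GP and $\mu^{\mathbf{f},i}_{\mathrm{x}}$ is deterministic) whose canonical pseudo-metric is exactly $d^{(i)}_{\hat{\mathrm{x}}_t}$ and whose pointwise variance is $\Sigma^{\mathbf{f},(i,i)}_{\mathrm{x},\mathrm{x}}$. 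The $i$-th component of $g(\hat{\mathrm{x}}_t,u_t)-\mathbf{f}(\mathrm{x},u_t)$ is then $\mu^i(\mathrm{x},\hat{\mathrm{x}}_t)-\mathbf{w}^i(\mathrm{x})$, so the triangle inequality gives $|g(\hat{\mathrm{x}}_t,u_t)-\mathbf{f}(\mathrm{x},u_t)|_1 \le |\mu(\mathrm{x},\hat{\mathrm{x}}_t)|_1 + \sum_{i=1}^n|\mathbf{w}^i(\mathrm{x})|$ for every $\mathrm{x}$. Taking the supremum over $\mathrm{x}\in I_{\hat{\mathrm{x}}_t}^{K_t}$ and then a union bound shows that the event $\{\sup_{\mathrm{x}}|g(\hat{\mathrm{x}}_t,u_t)-\mathbf{f}(\mathrm{x},u_t)|_1>K_{t+1}\}$ is contained in $\bigcup_{i=1}^n\{\sup_{\mathrm{x}}|\mathbf{w}^i(\mathrm{x})| > (K_{t+1}-\sup_{\mathrm{x}}|\mu(\mathrm{x},\hat{\mathrm{x}}_t)|_1)/n\}$, where the threshold is exactly the first term $(K_{t+1}-\sup_{\mathrm{x}}|\mu(\mathrm{x},\hat{\mathrm{x}}_t)|_1)/n$ in the definition of $\bar\eta^i$. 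Since $\mathbf{w}^i$ is centred, $-\mathbf{w}^i$ has the same law as $\mathbf{w}^i$, so $P(\sup_{\mathrm{x}}|\mathbf{w}^i(\mathrm{x})|>a)\le 2\,P(\sup_{\mathrm{x}}\mathbf{w}^i(\mathrm{x})>a)$, which produces the leading factor $2$.

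Next I would bound each $P(\sup_{\mathrm{x}\in I_{\hat{\mathrm{x}}_t}^{K_t}}\mathbf{w}^i(\mathrm{x})>a)$, with $a=(K_{t+1}-\sup_{\mathrm{x}}|\mu(\mathrm{x},\hat{\mathrm{x}}_t)|_1)/n$, using the two classical ingredients that underpin the bound of \cite{cardelli2018robustness}: the Borell--TIS inequality, $P(\sup_{\mathrm{x}}\mathbf{w}^i(\mathrm{x}) > \mathbb{E}[\sup_{\mathrm{x}}\mathbf{w}^i(\mathrm{x})]+u) \le \exp(-u^2/(2\xi^{(i)}))$ with $\xi^{(i)}=\sup_{\mathrm{x}}\mathbb{E}[(\mathbf{w}^i(\mathrm{x}))^2]=\sup_{\mathrm{x}}\Sigma^{\mathbf{f},(i,i)}_{\mathrm{x},\mathrm{x}}$, and Dudley's entropy integral, $\mathbb{E}[\sup_{\mathrm{x}}\mathbf{w}^i(\mathrm{x})] \le 12\int_0^{\lambda^i}\sqrt{\ln C_i(z)}\,dz$, where $C_i(z)$ is the $z$-covering number of $I_{\hat{\mathrm{x}}_t}^{K_t}$ in the metric $d^{(i)}_{\hat{\mathrm{x}}_t}$ and $\lambda^i$ is half its $d^{(i)}_{\hat{\mathrm{x}}_t}$-diameter, exactly as in the statement (see \cite{adler2009random}). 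To turn the entropy integral into the explicit integrand written above, I would use the local Lipschitz estimate $d^{(i)}_{\hat{\mathrm{x}}_t}(\mathrm{x}_1,\mathrm{x}_2)\le L^i_{\hat{\mathrm{x}}_t}|\mathrm{x}_1-\mathrm{x}_2|_2$, so that partitioning the side-$D$ cube into axis-aligned sub-cubes small enough to have $d^{(i)}_{\hat{\mathrm{x}}_t}$-diameter at most $z$ gives $C_i(z)\le(\sqrt{N}\,L^i_{\hat{\mathrm{x}}_t}D/z+1)^n$. Finally I would choose $u=a-\mathbb{E}[\sup_{\mathrm{x}}\mathbf{w}^i(\mathrm{x})]$ and replace $\mathbb{E}[\sup_{\mathrm{x}}\mathbf{w}^i(\mathrm{x})]$ by its Dudley upper bound, obtaining $u\ge\bar\eta^i$; the hypothesis $\bar\eta^i>0$ makes this a valid positive deviation, and since $z\mapsto e^{-z^2/(2\xi^{(i)})}$ is decreasing we get $P(\sup_{\mathrm{x}}\mathbf{w}^i(\mathrm{x})>a)\le e^{-(\bar\eta^i)^2/(2\xi^{(i)})}$. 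Combining with the symmetrisation factor and summing over $i\in\{1,\dots,n\}$ yields the stated bound.

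The step I expect to be the main obstacle is the entropy/covering-number estimate: passing from the abstract canonical pseudo-metric $d^{(i)}_{\hat{\mathrm{x}}_t}$ to a closed-form, computable bound needs a legitimate \emph{local} Lipschitz constant $L^i_{\hat{\mathrm{x}}_t}$ valid on all of $I_{\hat{\mathrm{x}}_t}^{K_t}$ together with a careful count of the hyper-cube's covering number, and one must check that the resulting integrand is integrable at $z=0$ (it behaves like $\sqrt{n\ln(1/z)}$, which is integrable, with $\lambda^i$ the natural half-diameter truncation of the integral). A secondary technical point is that $\sup_{\mathrm{x}}\mathbf{w}^i(\mathrm{x})$ over the uncountable set $I_{\hat{\mathrm{x}}_t}^{K_t}$ must be a bona fide random variable, which follows from almost-sure sample-path continuity / separability of the GP --- standard for the covariance functions used in practice and inherited from the setting of \cite{cardelli2018robustness}. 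The remaining manipulations --- the triangle inequality, the union bound, and the symmetrisation --- are routine.
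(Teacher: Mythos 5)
Your proposal is correct and follows essentially the same route as the paper, which gives no explicit proof of Proposition~\ref{Prop:SupremumGP} but defers to \cite{cardelli2018robustness}, whose argument is precisely the combination you describe: triangle inequality and union bound to reduce to $n$ centred scalar GP suprema, symmetrisation for the factor $2$, Dudley's entropy integral (with the half-diameter truncation $\lambda^i$ and the Lipschitz covering-number estimate for the hyper-cube) to bound the expected supremum, and the Borell--TIS inequality with deviation $u \geq \bar\eta^i$ and variance proxy $\xi^{(i)}$. No gaps; the points you flag (measurability of the supremum, integrability of the entropy integrand at $z=0$) are handled exactly as you suggest.
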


%\ap{Work in progress: Explain how to get the whole thing with the selection of Ks}
%\begin{remark}
%Note that if $\mathbf{e}_t$ is defined as $\mathbf{e}_t=(\mathbf{x}_t^i -\hat{\mathrm{x}}_t^i),$ where $\mathbf{x}_t^i,\hat{\mathrm{x}}_t^i$ are respectively the $i-th$ components of $\mathbf{x}_t^i$ and $\hat{\mathrm{x}}_t^i$ that is, if the error only takes into account one dimension of $\mathbf{x}$ and is one-sided. Then, the bound derived in Proposition \ref{Prop:SupremumGP} can be further improved by employing Theorem 1 in \cite{cardelli2018robustness}.
%\end{remark}
  
By using the upper bound of Proposition \ref{Prop:SupremumGP} in Theorem \ref{Prop:BoundErrorRecursive} we can propagate the bound through time for any value of $K_t > 0 $, $t = 0,\ldots,H$.
This give us the degree of freedom necessary to iteratively select, given $K_t$, the values for $K_{t+1}$ that meet an a-priori specified probabilistic error $ \epsilon > 0 $.
To do this it suffices to evaluate the one-step bound resulting from the combination of Proposition \ref{Prop:SupremumGP} and Theorem \ref{Prop:BoundErrorRecursive}, and choose the smallest value of $K_{t+1}$ such that $ P( \mathbf{e}_{t+1} > K_{t +1}  ) < \epsilon$.
  
\begin{corollary}
\label{Corollary:TestTUbeTraectory}{(of Theorem \ref{Prop:BoundErrorRecursive})}
For any $\epsilon>0$ pick the smallest $K_0,...,K_H$ such that for any $t \in \{0,...,H \}$ we have that $ P( \mathbf{e}_t > K_t  )< \epsilon$.
Then, this implies that 
$$\forall t \in \{0,...,H \}, \quad P( \mathbf{x}_t \in I_{\hat x_t}^{K_t})> 1 - \epsilon.$$ 
\end{corollary}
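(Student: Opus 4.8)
The plan is to observe that, once the thresholds $K_0,\dots,K_H$ have been fixed by the selection rule in the hypothesis, the corollary is essentially a reformulation of the definitions of the tube $I_{\hat x_t}^{K_t}$ and of the error process $\mathbf{e}_t$, together with Theorem~\ref{Prop:BoundErrorRecursive} (propagated via Proposition~\ref{Prop:SupremumGP}) used only to guarantee that admissible thresholds exist.

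First I would unfold the two definitions. By construction $I_{\hat x_t}^{K_t}=\{\mathrm{x}\in X:\,|\hat x_t-\mathrm{x}|_1\le K_t\}$ and $\mathbf{e}_t=|\mathbf{x}_t-\hat x_t|_1$. Hence, for each fixed $t$, membership $\mathbf{x}_t\in I_{\hat x_t}^{K_t}$ is exactly the event $\{\mathbf{e}_t\le K_t\}$ (reading tube membership as the $\ell_1$-ball condition around $\hat x_t$, with the standing convention that the trajectory takes values in $X$). Taking complements, $\{\mathbf{x}_t\notin I_{\hat x_t}^{K_t}\}=\{\mathbf{e}_t>K_t\}$, so that for every $t\in\{0,\dots,H\}$ we have the identity $P(\mathbf{x}_t\in I_{\hat x_t}^{K_t})=1-P(\mathbf{e}_t>K_t)$.

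Second I would invoke the hypothesis: the $K_t$ are chosen so that $P(\mathbf{e}_t>K_t)<\epsilon$ for all $t$. Substituting into the identity above yields $P(\mathbf{x}_t\in I_{\hat x_t}^{K_t})>1-\epsilon$ for every $t\in\{0,\dots,H\}$, which is the claim. To justify that such thresholds can actually be selected (so that the word ``pick'' in the statement is well posed), I would argue by induction on $t$: the base case holds because $P(\mathbf{e}_0>K_0)=1-\int_{I_{\hat x_0}^{K_0}}\mathcal{N}(z\,|\,\mu_0,\Sigma_{0,0})\,dz\to 0$ as $K_0\to\infty$, so a smallest $K_0$ with $P(\mathbf{e}_0>K_0)<\epsilon$ exists; for the inductive step, given an admissible $K_t$, Theorem~\ref{Prop:BoundErrorRecursive} together with the explicit upper bound of Proposition~\ref{Prop:SupremumGP} gives a computable bound on $P(\mathbf{e}_{t+1}>K_{t+1})$ that decreases in $K_{t+1}$ and can be driven below $\epsilon$, so a smallest admissible $K_{t+1}$ exists. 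This is precisely the greedy construction spelled out in the paragraph preceding the corollary.

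The argument has no genuine obstacle; the only two points that need care are (i) the set-theoretic identification $\{\mathbf{x}_t\in I_{\hat x_t}^{K_t}\}=\{\mathbf{e}_t\le K_t\}$, which is immediate from the definitions modulo the convention on $X$, and (ii) the existence of admissible $K_t$, which is where Theorem~\ref{Prop:BoundErrorRecursive} and Proposition~\ref{Prop:SupremumGP} do the real work — in particular one needs the per-coordinate variance $\xi^{(i)}$ finite and enough room for $\bar\eta^i>0$ at a suitable $K_{t+1}$, which is implicit in the phrasing ``pick the smallest $K_t$ such that $P(\mathbf{e}_t>K_t)<\epsilon$.''
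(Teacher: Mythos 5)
Your proof is correct and follows the same (essentially immediate) route the paper intends: the paper gives no separate proof of the corollary, treating it as a direct consequence of the identity $\{\mathbf{x}_t\in I_{\hat x_t}^{K_t}\}=\{\mathbf{e}_t\le K_t\}$ together with the hypothesis $P(\mathbf{e}_t>K_t)<\epsilon$, which is exactly your first two steps. Your additional discussion of why admissible thresholds exist is a reasonable elaboration of the paragraph preceding the corollary, but is not part of the claim being proved.
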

As a result we can compute a sequence of subsets $I_{\hat x_t}^{K_t}$ of the state space such that the GP trajectories are bounded to stay inside them with probability at least $1-\epsilon$ at each time step. 
Given a safe region $S \subseteq X$ we can hence produce a certificate on the $GP$ trajectories lying inside $S$ with probability at least $1-\epsilon$ by checking the intersection between the $I_{\hat x_t}^{K_t}$ and $S$.%%  
%Given a safe region $S \subseteq X$ we can hence rely on the sequence of sets $I_{\hat x_t}^{K_t}$, $t = 0,\ldots,H$

% Moreover, the resulting bound holds for any $K$. As a consequence, the value of $K$ at each time step  should be chosen in order to optimize the bounds. For example, $K$ at time $t$ can be the smallest real value such that $P(\mathbf{e}_{t-1}>K)<\epsilon$ for a given $\epsilon > 0$. \ap{Consider putting this in the form of Corollary, or Remark or observation to highlight the role of $\epsilon$ as is the actual essence of the high-probability bound we get.}

\subsection{Bounds in Bayesian Learning Settings}
The bound in Proposition \ref{Prop:SupremumGP} requires the computation of $\sup_{x \in I_{\hat x_{t}}^{K_t}}|\mu^{i}(x,\hat x_{t})|_1$, $\xi^{(i)}$, $L_{\hat x_t}^{i}$ and $\lambda_1$, which are related to the extrema of the mean and variance of the GP $\mathbf{f}$ in $I_{\hat x_{t}}^{K_t}$ and to a Lipschitz constant on $d^{(i)}_{\hat {\mathrm{x}}_t}$.
In a Bayesian learning setting, these can be computed by relying on the methods discussed in \cite{cardelli2018robustness} and applying them to the GP of \eqref{Eqn:OriginalProcess}.
We here briefly review and adapt to the current setting the methods for the bounding of $\sup_{x \in I_{\hat x_{t}}^{K_t}}|\mu(x,\hat x_{t})|_1$, $\xi^{(i)}$, while we refer to \cite{cardelli2018robustness} for a detailed explanation of how to compute $L_{\hat x_t}^{i}$ and $\lambda_1$ (which are not changed by the control input). 
Let $k^{i}(\cdot,\cdot)$ be the GP kernel function for the $i$-th output dimension, $x \in I_{\hat x_{t}}^{K_t}$ be a test point and $\mathcal{D} = \{(x_j,y_j)\ | j = 1,\ldots, M \}$ a training data set.
Then the mean and variance of the Gaussian process $\mathbf{f}$ conditioned on the training data is given by the set of equations \cite{gpbook}:
\begin{align} 
     \mu_{x}^{\mathbf{f},i} &= k(x , \mathcal{D}) k^{i}(\mathcal{D},\mathcal{D})^{-1} \mathbf{y} \label{eq:mean_inference} \\
     \Sigma_{x,x}^{\mathbf{f},(i,i)} &= k(x,x) - k^{i}(x , \mathcal{D}) k(\mathcal{D},\mathcal{D})^{-1}  k^{i}(x , \mathcal{D})^T  \label{eq:var_inference}
\end{align}
where $\mathbf{y} = [y_1,\ldots,y_M]$.
Assuming continuity and differentiability of the kernel
function $k(\cdot,\cdot)$, it is possible to find linear upper and lower bounds on the
covariance between a test point and a point in the training dataset. 
In the case of the squared exponential kernel it suffices to see that the covariance between a test point $x$ and a training point $x_j$ can be written as a differentiable, convex function of the uni-dimensional auxiliary variable $ z_j = \vert \vert {x} -  x_j \vert \vert$.
As such, by inspection of the derivatives it is possible to find linear coefficients $a_j^L,b_j^L,a_j^U$ and $b_j^U$ such that\footnote{By definition of convex function, a lower bound is given by any tangent to the function (computed through derivative calculations) and an upper bound is given by connecting the extrema of the function in $I_{\hat x_{t}}^{K_t}$.}:
\begin{equation}\label{eq:linear_bounds}
a_j^L + b_j^L \vert \vert x -  x_j \vert \vert \leq k^{i}(x,x_j) \leq a_j^U + b_j^U \vert \vert x -  x_j \vert \vert,\;  \forall x \in I_{\hat x_{t}}^{K_t}.
\end{equation}
These bounds can be propagated through the inference formula for $\mathbf{f}$ by performing the matrix multiplication involved in  (\ref{eq:mean_inference}) and (\ref{eq:var_inference}).
The resulting equations for the mean and variance are respectively linear and quadratic on the auxiliary variable $ z_j = \vert \vert {x} -  x_j \vert \vert$, and can hence be optimised analytically by inspection of the derivatives.
This can then be further refined using a branch and bound optimisation approach over $I_{\hat x_{t}}^K$.

This can be straightforwardly generalised to take into account the extra input dimensions coming from a deterministic control strategy $\pi(x)=u$, without increasing the size of the branch and bound search space, and thus without significantly changing the computational time.
To do so it suffices to solve the optimisation problems $u_j^L = \min_{x \in I_{\hat x_{t}}^{K_t}}  \pi_j(x)$ and $u_j^U = \max_{x \in  I_{\hat x_{t}}^{K_t}} \pi_j(x) $ %\JC{Check this .. j missing?} 
for $j = 1, \ldots, m$, that is computing maximum and minimum of the control allowed in the current state space sub-region. 
Notice that for the policy functions implemented (e.g.\ linear or sum of radial basis functions) this can be computed analytically and in constant time \cite{pilco}. 
The bounds can then be used by treating $u$ and $x$ symmetrically.

\subsubsection{Computational Complexity of Bound}
Computation of the bound involves the calculation of \eqref{eq:linear_bounds} for each point in the training set $\mathcal{D}$, and the computation of the inference formulas \eqref{eq:mean_inference} and \eqref{eq:var_inference} on the resulting bounds. 
This is $\mathcal{O}(M)$ for the mean function and $\mathcal{O}(M^2)$ for the variance (as the latter is quadratic), where $M = \vert \mathcal{D} \vert$  is the number of training samples used.
Refining the bounds with a branch and bound approach has a worst-case cost that is exponential in the dimension of the variable $x$, that is $n$.
Bounding of $L_{\hat x_t}^{i}$ and $\lambda_1$ is done in constant time.
This is iterated for any output dimension of the GP.
After branch and bound has converged, computation of the optimal value for $K_{t+1}$ is linear on the number of candidate values explored, as it involves the computation of the integral in Proposition \ref{Prop:SupremumGP} with known constants.
Finally the procedure is identically repeated for each time step $t$.

\subsection{Using the Safety Guarantees for PILCO}
%\LL{We may add a Proposition in here that guarantees that the synthesized strategy is safe, if there exists one, or something similar. It is always cool to have some formal statements for the algorithm. AA: I agree.}
%\KP{This wouldn't be easy, since the policy search is only getting to a local optimum}\LL{We can just claim that the syntesized policy will be safe,}
%\LL{Here, we may need to add some details about PILCO. Otherwise, it is not clear what the policy parameters are}\ap{Agree, we should also have enough space for doing so I think without too much problem}.
%\ap{I think 'extending PILCO with Safety Guarantees' might be a strong title for this section - at the end of the day what we do is just "take whatever PILCO spits out, if it is safe label it as 'good' if it is not safe label it as 'not good' and repeat.}

%In this section, we incorporate the bounds proposed above in a safe, model-based policy search algorithm,
%which extends the Safe PILCO framework \cite{polymenakos2019safe}. 
In this section we briefly examine how the safety guarantees can be used in conjunction with a safe, model-based policy search algorithm,
which extends the Safe PILCO framework \cite{polymenakos2019safe}. 
PILCO's goal is to control an unknown dynamical system throughout a task, by efficiently optimising the parameters $\theta$ of a feedback control policy $\pi^\theta$, implemented originally as a linear controller or a sum of radial basis functions. 
In Safe PILCO, safety considerations are added, with the introduction of constraints, that demand the system to stay in a safe subset of the state space $S\subseteq X$ with high probability. 
Specifically, after a controller is trained using a learned GP model, and before the controller is applied to the controlled system, the probability that this controller violates the constraints is estimated using moment matching. 
Since MM is an approximation that might lead to underestimating the true uncertainty of the iterative predictions (as we show below) controllers that violate the constraints can be allowed to be implemented.
We therefore suggest to replace this step, referred to in (\cite{polymenakos2019safe}) as \textit{a safety check}, with the bounds estimated from Corollary \ref{Corollary:TestTUbeTraectory}. 
This replacement is straightforward and provides better protection from unsafe controllers used in possibly safety critical applications.

\section{Experiments}

In this section we apply the methods presented above to various GPs with SQE kernel trained from data.
In all the experiments we use the bound from Theorem \ref{Prop:BoundErrorRecursive} with the L1 norm, that is with $d=1$. 
First we explicitly compare our formal, guaranteed bounds with the probability estimation obtained by Moment Matching (MM) in two iterative prediction scenarios (with no control involved).
We then investigate in the Mountain Car application \cite{Moore90efficient} the behaviour of our methodology for certification of a given control policy.
Finally we show how to compute bounds for the behaviour of closed-loop systems for a given controller\footnote{Code will be available on \texttt{github}.}. 
%Detail about the experimental settings are reported in the Supplementary Materials. \ap{Supplementary materials?}
GPs are trained with the GPML package, using maximum marginal likelihood for hyperparameter selection.
Candidate policies are either arbitrarily selected for the purpose of demonstration or obtained from PILCO.
They are either linear or linear squashed through a sine wave to constrain the input magnitude \cite{pilco}.
All the experiments were run on a MacBook Pro (Early 2015) with a DDR3 8 GB RAM  @1867 MHz, and an Intel Core i7 processor @3.1 GHz.

\subsection{Iterative Prediction}

We analyse the behaviour of our method in a one-dimensional synthetic dataset where the system dynamics are distributed as a Gaussian at each time step. 
Further we assume that the initial state of the system is Gaussian, that is $\mathbf{x}_0 \sim \mathcal{N} (\mu_0,\Sigma_0)$, with mean and variance given by $\mu_0=0$ and $\Sigma_0 = 0.01$.
We compute predictions and bound the trajectory for an horizon of $H=10$ time steps.
We use $\epsilon = 0.05$, that is we require bounds holding with probability at least $95\%$ and compare with the results obtained by MM.
Namely, we compare our bounds with plus/minus two standard deviation estimated by MM. Notice that when MM is exact (i.e.\ when the system dynamics are effectively Gaussian at each time-step), then this would as well correspond to bounds at $95\%$ probability.
Results for this analysis are given in Figure \ref{fig:known}, where our bound is depicted with a thick red solid line, and MM results are represented by the green shaded area. 
Further, we extract 100 trajectories from the GP, which are depicted with thin colored lines, in order to provide statistical validation for the results. 
Notice that the latter are almost entirely within the MM shaded area. In fact, since the system dynamics are fully Gaussian at each time step, that is $\mathbf{x}_t$ is Gaussian for each $t$, then the approximation made by MM is almost exact and well behaved. 
Note that  our method successfully bounds the sampled trajectories at each time step.

\begin{figure}
\includegraphics[width=\linewidth]{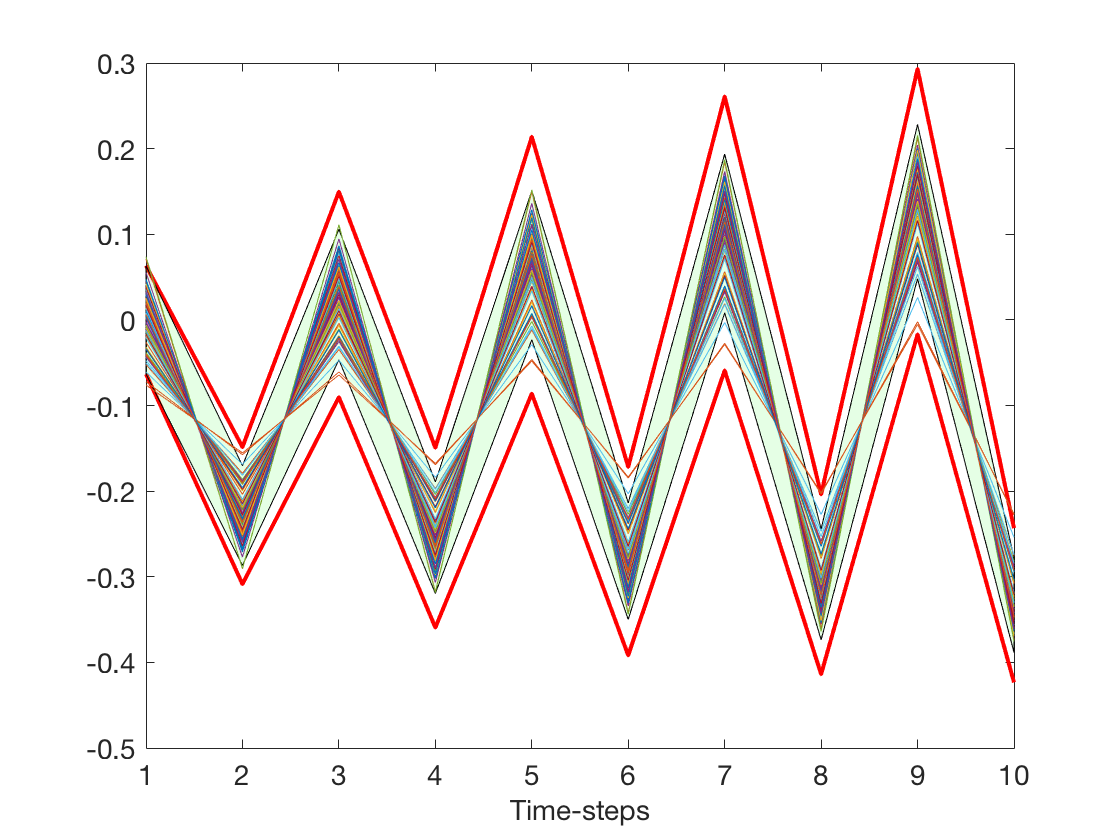}
\caption{
\small
A set of 100 trajectories sampled from a GP (thin coloured line).
The green shaded area corresponds to plus/minus two standard deviations of the moment matching prediction.
The thicker red lines delimit the area with $95\%$ probability according to Theorem \ref{Prop:BoundErrorRecursive}.}
\label{fig:known}
\end{figure}

\begin{figure*}
\centering
\begin{subfigure}[b]{0.3\textwidth}
\includegraphics[width=\linewidth]{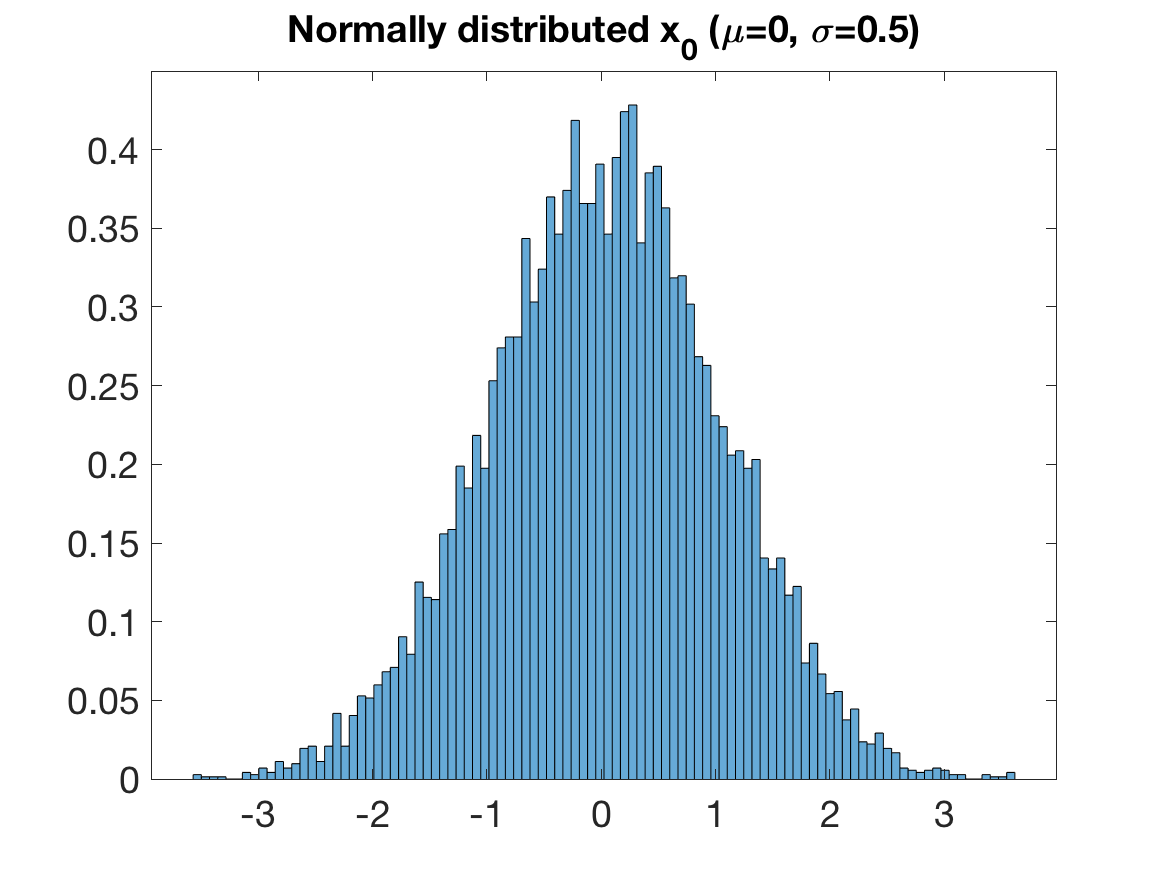}
%\caption{Initial state distribution - 10000 samples}
\label{fig:x0}
\end{subfigure}
\begin{subfigure}[b]{0.3\textwidth}
\includegraphics[width=\linewidth]{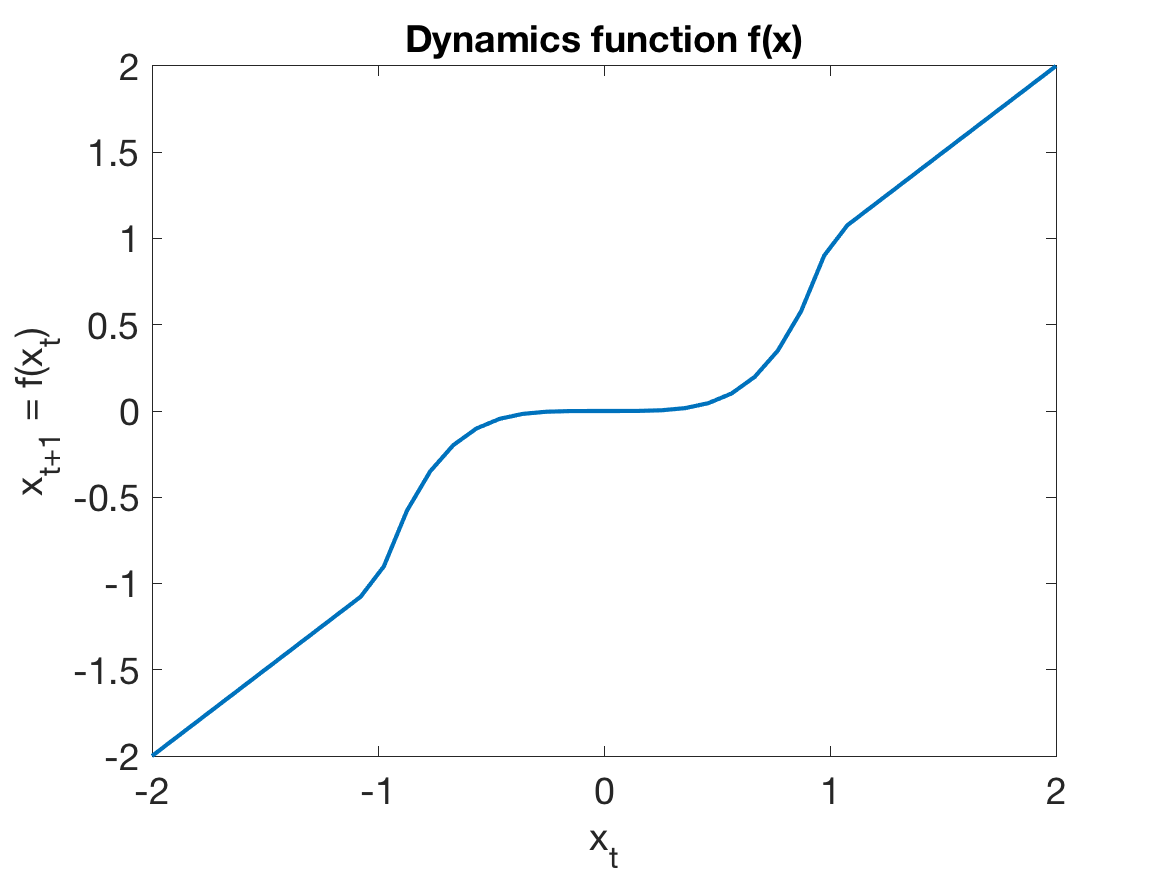}
%\caption{Dynamics function f as described in text}
\label{fig:f}
\end{subfigure}
\begin{subfigure}[b]{0.3\textwidth}
\includegraphics[width=\linewidth]{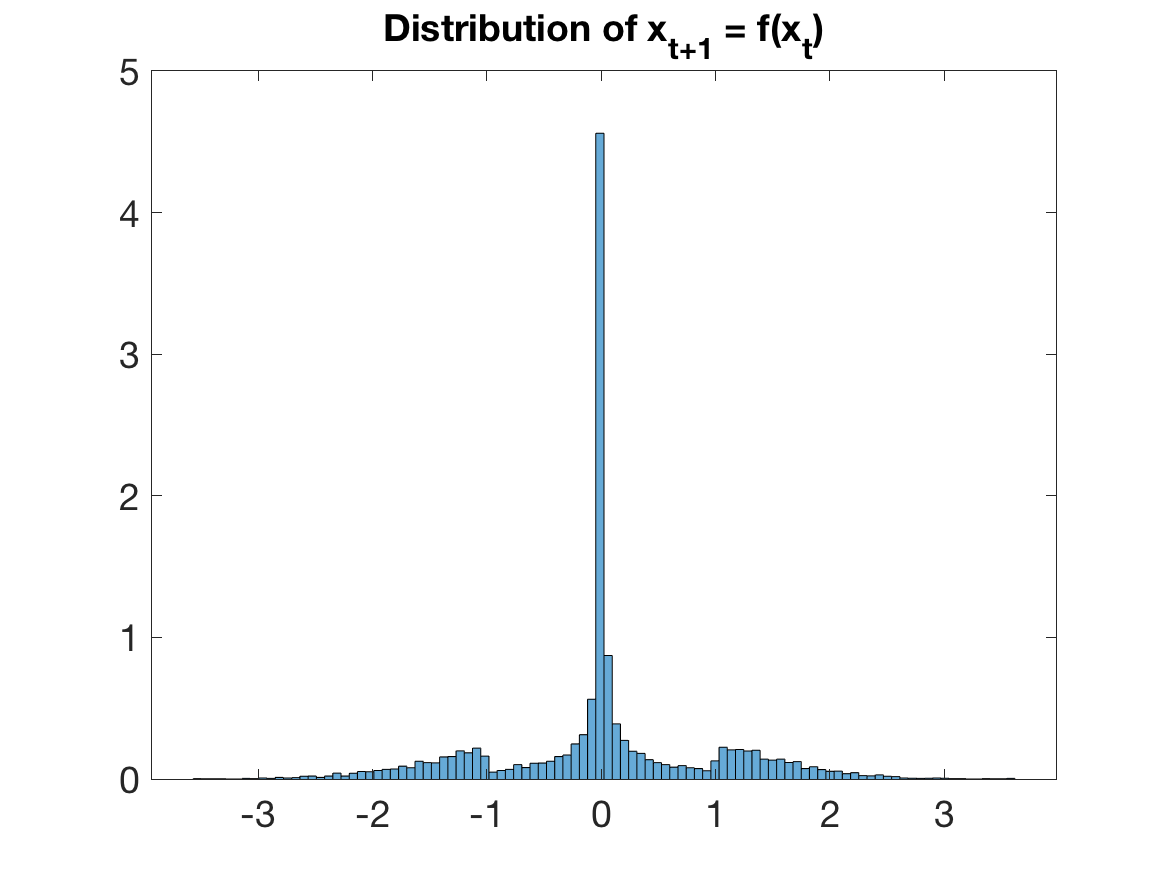}
%\caption{Resulting distribution for the system's state after one time-step}
\label{fig:x1}
\end{subfigure}
\caption{
\small
Initial state distribution, system dynamics and state distribution after a time-step for the system described by the set of Equations \ref{eq:system_dynamics}. 
Histograms show empirical results for 10000 trajectories. On the \textbf{left} is the normally distributed initial state, which passes through the nonlinear dynamics function in the \textbf{middle}, leading to the distribution on the \textbf{right}.}
\label{fig:dyn}
\end{figure*}

\begin{figure*}
\centering
\begin{subfigure}[b]{0.24\textwidth}
\includegraphics[width=\linewidth]{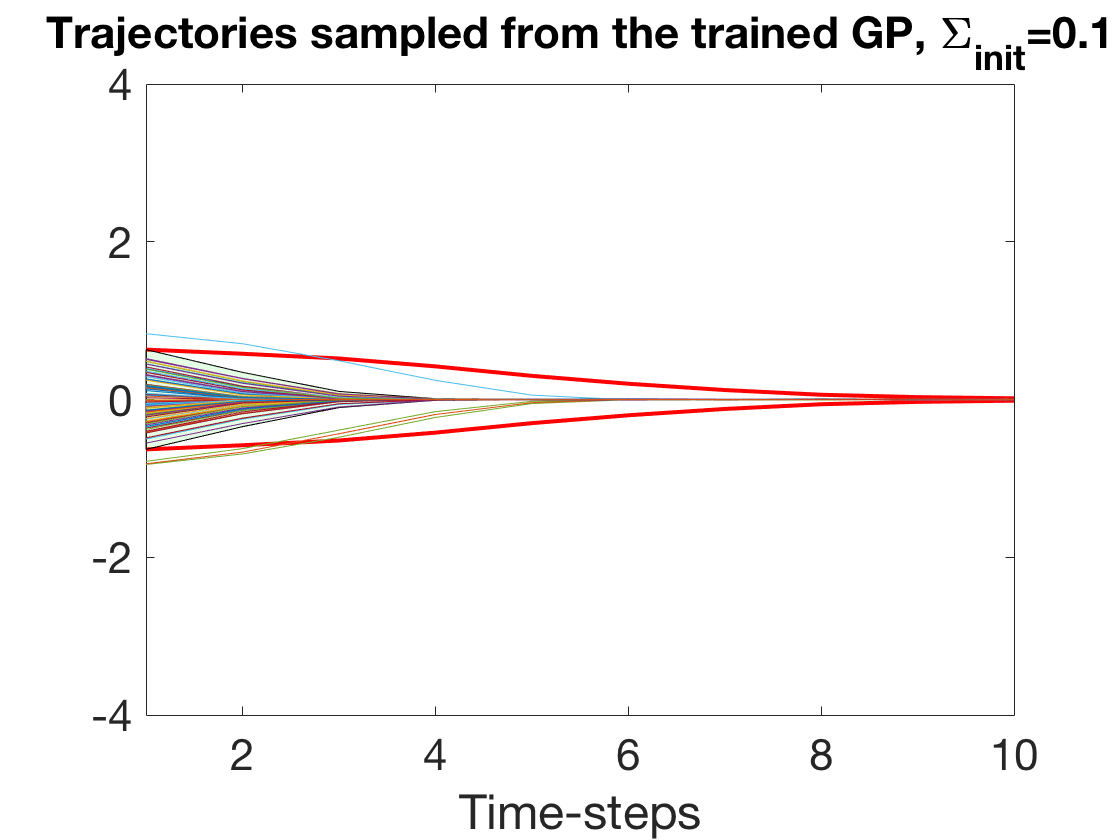}
\end{subfigure}
\begin{subfigure}[b]{0.24\textwidth}
\includegraphics[width=\linewidth]{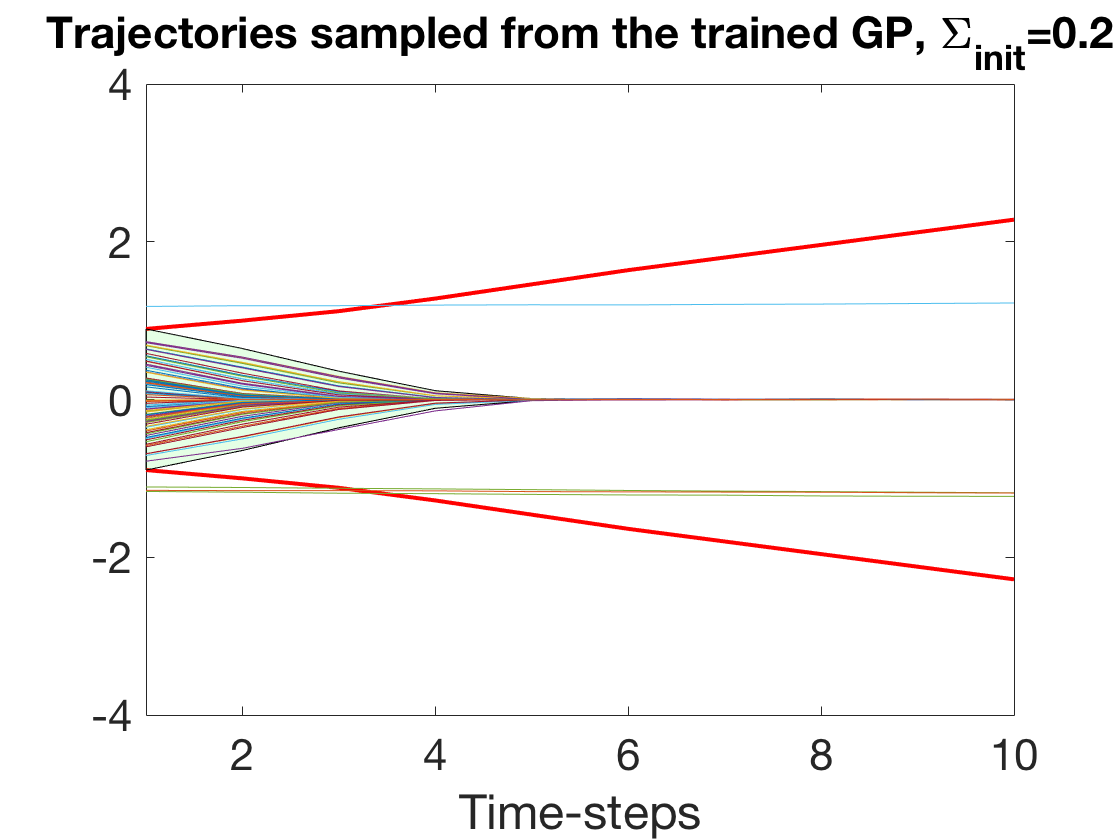}
\end{subfigure}
\begin{subfigure}[b]{0.24\textwidth}
\includegraphics[width=\linewidth]{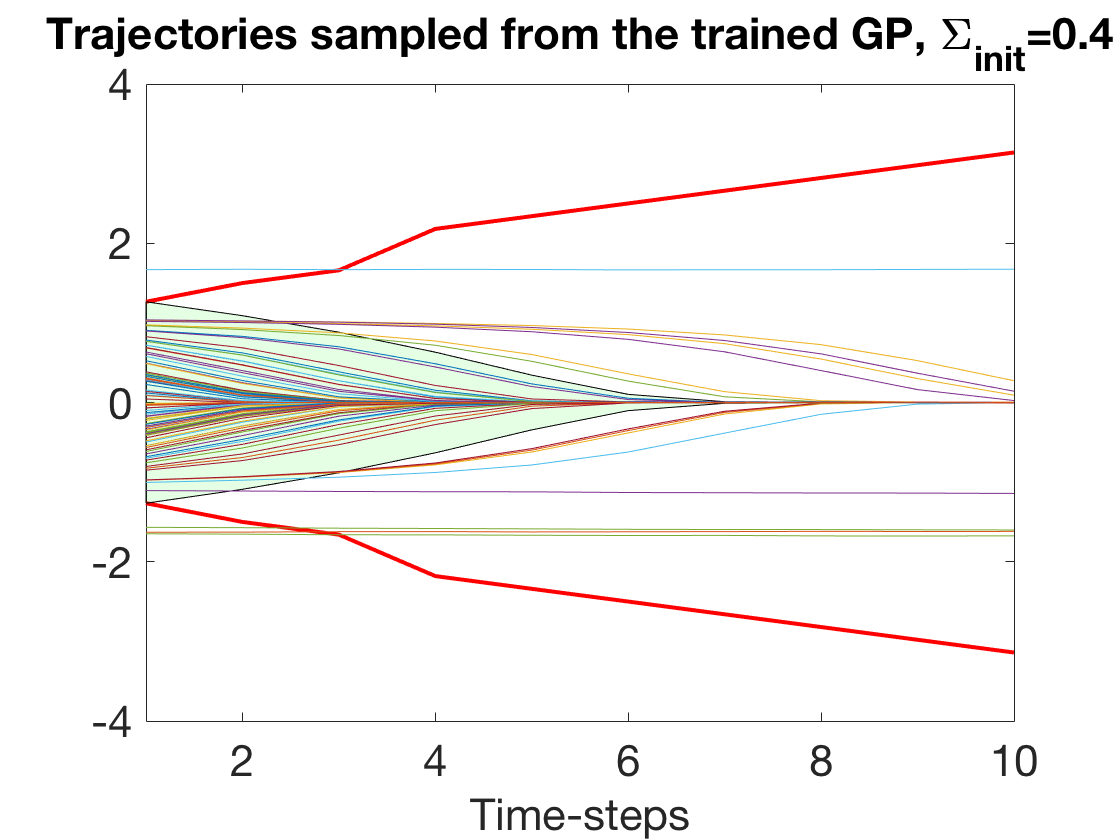}
\end{subfigure}
\begin{subfigure}[b]{0.24\textwidth}
\includegraphics[width=\linewidth]{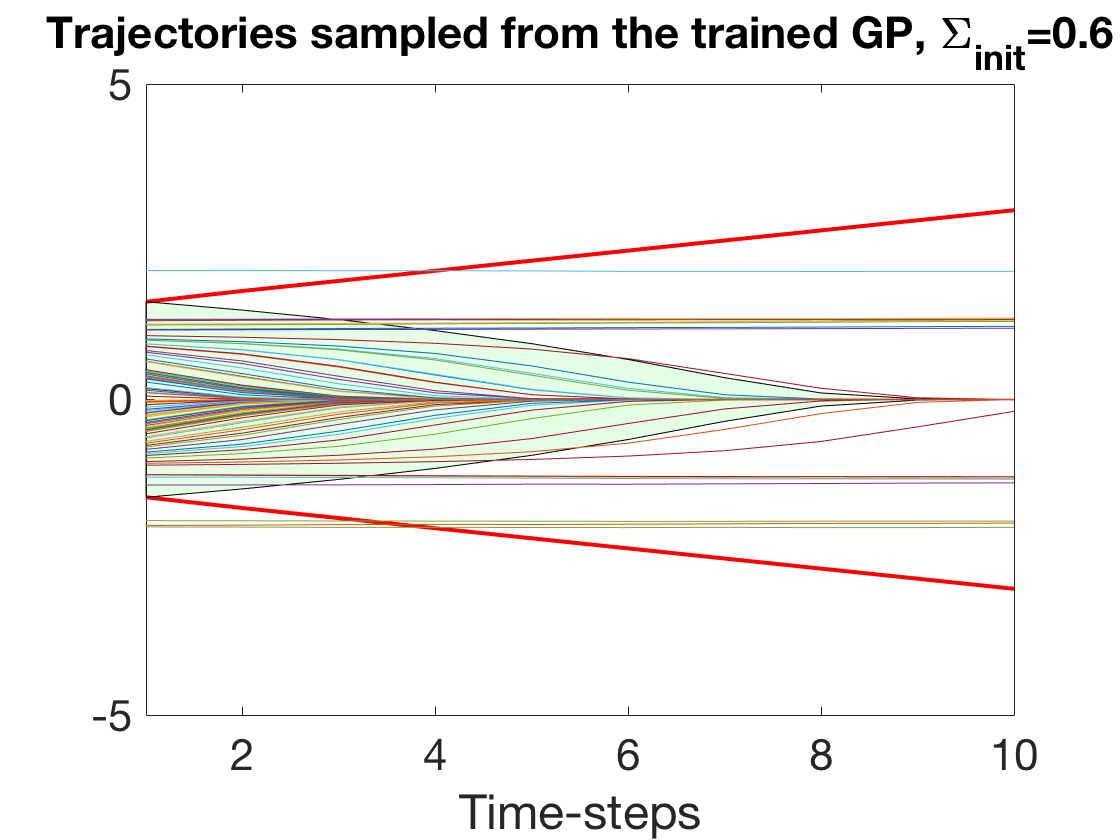}
\end{subfigure}
\caption{
\small
As the initial variance increases, more trajectories, having an initial state $|x_0|>1$,
do not converge to $0$. 
Moment matching fails to account for this fact (green shaded area showing two standards deviations).
Our bound (red line) grows appropriately. Thinner colored lines represent 100 sampled trajectories from the GP.
}
\label{fig:f_traj_GP}
\end{figure*}

Notice that MM succeeds in bounding the GP trajectories because the Gaussian approximation performed by MM is well suited for the example above. 
However, as soon as this does not hold anymore, the results obtained with MM fail to bound the actual GP trajectories.
As an example of this, consider a system with dynamics given by:
%\ap{$g$ is the predictor. I'll just call it something else.}
%\ap{$f$ was previously used to describe the GP, now it is the true underlying dynamics. This is confusing.}
\begin{equation}\label{eq:system_dynamics}
    h(x) = 
    \begin{cases}
    \text{sign}(x) x^4, \text{ if } |x| < 1 \\
    x, \text{ otherwise.}
    \end{cases}
\end{equation}
We train a GP on data sampled from this system.
With the function being non-linear, we have that $\mathbf{x}_t$ is non-Gaussian for $t > 0$ (see Figure \ref{fig:dyn}), which implies that application of MM can introduce unaccounted approximation errors.
Furthermore, the specific dynamics chosen are such that the variance predicted with MM will inevitably shrink, leading to a systematic underestimation of the actual region in which GP trajectories are located.
When the initial position of the trajectory, $x_0$, is greater than 1, then the trajectory will constantly be at $x_0$. 
Thus, assuming $x_0  \sim \mathcal{N} (\mu_0,\Sigma_0) $, one can choose a $\Sigma_0$ big enough such that the GP trajectories will be outside any tube parallel to the x-axis with probability greater than $\epsilon$.
However after finitely-many time steps MM variance will wrongly shrink to values very close to zero, hence failing to account for the majority of the probability mass of the GP. 
%Additional explanation of this phenomenon is given in the Supplementary Materials.\ap{Supplementary materials?}

Empirical results for this system using $\epsilon = 0.05$ are plotted in Figure \ref{fig:f_traj_GP}, for values of initial variance $\Sigma_0$ going from $0.1$ to $0.6$.
The empirical results agree with the discussion above.
If the initial variance is small enough, than the overwhelming majority of GP trajectories will converge to zero.
However, as the initial variance grows, more and more trajectories do not converge. 
MM fails to account for this behavior, and the variance predicted by MM will fail to mirror the actual dynamics of the GP under analysis. 
Note that, our method, being guaranteed to provide correct results, is able to successfully bound (up to $1-\epsilon$ probability) the actual trajectory of the GP, independently of the initial variance. In fact, our method does not rely on any particular assumption, and is able to provide worst-case scenario analysis independently of the general shape of the GP.

\subsection{Open-loop control for Mountain Car}
%\ap{Here discuss mountain car experiment with open-loop control}
In this Section we show how our method can be used to certify a control input for a dynamical system. 
The environment we are considering is a version of the continuous mountain car problem \cite{Moore90efficient}.
Briefly, a car is supposed to go up a hill to its right, with a goal state on top of the hill.
Because it does not have enough power to climb the hill directly, it has to go up a hill to the left first to gather momentum.
%Depending on the exact parameters, the car might have to repeatedly do this back and forth a few times. 
The state space has two dimensions (position and velocity of the car), and the control input is one dimensional and corresponds to a force applied to the car. 

As previously, we train a GP on data generated from the environment, in this case following a random policy. 
We assume we have access to an initial normal distribution for the starting state 
and we want to evaluate a proposed sequence of actions.
Specifically, we want to perform predictions about the sequence of states (position and velocity) of the car, and to provide high probability bounds for these predictions.
The trained GP model has a 3-dimensional input space, as it takes $(x_t, u_t)$ pairs as inputs, corresponding to the two state-space variables and the control input, 
and 2-dimensional outputs, that correspond to $x_{t+1}$. 
%\LL{This last sentence is not clear to me. The input space is unidimentionsal and the state space bi-dimensional, right? Then, why we say that the input space is three-dimnesional?}
The two output dimensions correspond to two independent GPs, each one predicting a state variable. 
However, the predictions of each model are based on the previous predictions of \emph{both} models.
In more detail, assume a state $x_t \in X \subset \mathbb{R}^2$, where both components of $x_t$ are bounded.
These form a tuple $[x_t^1, x_t^2, u]$, where $x_t^1 \in [lb_1, ub_1]$, and $x_t^2 \in [lb_2, ub_2]$, and the exact value of $u$ is known (as we are verifying an arbitrary, fixed control policy).
This tuple is the input to the two GP models, 
with one of them providing the predicted position $x_{t+1}^1$,
with its new lower and upper bound, and the other one providing the same quantities for the velocity $x_{t+1}^2$.
%\ap{Report/refer to value used for $u$.}
%In this scenario, we start by training a GP model on data coming from interacting with the mountain car system randomly, 
%and then use that model to calculate the formally bounded predicted trajectories that correspond to a specific sequence of 
%actions applied to the mountain car. 

% \begin{figure}
%     \centering
%     \quad
%     \begin{subfigure}[t]{0.21\textwidth}
%     \includegraphics[width=\textwidth]{}
%     \caption{Inverted pendulum setup. The goal position has the pendulum upright and the controller 
%     applies a limited amount of torque on the pendulum's joint.}
%     \label{fig:inv_pend}
%     \end{subfigure}
%     \caption{Renderings of the two environments.\protect\footnotemark}
%     %\LL{We need to make this plot a bit more informative. FOr example, it is not clear what are the goals of each experiments and the controls.}\ap{Agreed} \KP{Alright I'll add some description in the caption, because I didn't make the figures myself, they're just screenshots} 
% \end{figure}

We train the GP model on a dataset of 500 random actions applied to the mountain car.
Now, for a proposed sequence of actions, we can bound the predicted trajectories, using our method with $\epsilon=0.1$ (that is bound with 90\% probability).
Results from a typical run are presented in Table \ref{table:open_loop}.
Drawing 1000 trajectories from the mountain car system we 
verify that empirically more than 90\% (91.6\%) of them stay within the bounded area around the predictions obtained by our bound. 
% \begin{figure}
% \includegraphics[width=0.4\textwidth]{}
% \caption{Mountain car setup. The goal position for the mountain car is denoted with a small flag.
% The controller applies (limited) force to the car on the car's plain.\protect\footnotemark}
% \label{fig:mc_car}
% \end{figure}
% \footnotetext{The renderings come from \url{https://gym.openai.com/}}

\begin{table}
\centering
\resizebox{\linewidth}{!}{
\begin{tabular}{c c c c c c}
\hline \\
t & Control $u$ & $x^1$& $x^2$ & Bound $x^1$ & Bound $x^2$ \\
1 & 1.85  & -0.50 & 0.00 & 0.020 & 0.020    \\ 
2 & -0.97 & -0.38 & 0.53 & 0.030 & 0.080   \\  
3 & 1.39  & -0.37 & -0.49 & 0.055 & 0.125  \\ 
4 & 0.17  & -0.53 & -0.20 & 0.105 & 0.220  \\ 
5 & -1.95 & -0.57 & -0.02 & 0.130 & 0.405  \\ 
6 & -     & -0.87 & -0.05 & 0.225 & 0.595   \\  
\hline \\
\end{tabular}}
\captionsetup{width=0.45\textwidth}
\caption{\small Predictions along with $90\%$ probability bounds for a sequence
of 5 actions applied to the mountain car. Columns $x^1$ and $x^2$ report the mean value of position and velocity of the car. Columns Bound $x^1$ and Bound $x^2$ report the computed the interval around $x^1$ and $x^2$ containing at least $90\%$ of the trajectories.}
\label{table:open_loop}
\vspace{-4mm}
\end{table}

%\acmt{where are the outcomes of this case study? ref to supp material?}

\subsection{Closed-loop control of linear and quadratic systems}
%\ap{The experiments in this Section are not particularly interesting, imho. They maybe show scalability but nothing much else really. Some figure or some comparison might have actually made the paper stronger.}
Here we use the proposed method to predict the closed-loop behaviour of several dynamical systems for a proposed feedback controller. 
The systems are either linear, or linear with an added quadratic term, of the general form: 
$$ \dot{x}^i = A^ix + x^T Q^ix + B^i u,$$
where $x^i$ is the $i$-th component of the state vector $x$. We assume a dataset $D = \{x_i, u_i, y_i\}$ of transitions is provided, where $y_t = x_{t+1} = f(x_t, u_t)$ and a candidate controller $C$. %\LL{Do we have observation noise?} 
We train the GP model on 300 data points, and the bounds are calculated with $\epsilon = 0.1$ ($90\%$ probability bounds). 
The controller is either linear, or linear squashed by a sine function, as in PILCO \cite{pilco}. 
The reference point is the origin and the starting region is a hypercube around the origin with size $0.1650$ for each dimension.
In this setting the mean of the predicted states for the system is of secondary importance
(in the linear case it's trivially zero)
and our interest is focused on the width of the bounds on the prediction error. 
Shrinking bounds can be interpreted as similar to a probabilistic notion of stability for the GP model:
%assuming the GP model accurately describes the behaviour of the underlying dynamical system,
shrinking bounds indicate that with the current controller and initial conditions, the model, with high probability, will stay in a (shrinking) region around the origin. 

\begin{table*}[t] 
\centering
\begin{tabular}{c c c c c c c c c c c c}
\hline \\
   & \multicolumn{2}{c}{System 1} & \multicolumn{2}{c}{System 2} & \multicolumn{2}{c}{System 3} & \multicolumn{2}{c}{System 4} & \multicolumn{3}{c}{System 5}  \\
 t$/$Bounds for: & $x$, W=0 & $x$, W=-0.2 & $x^1$ & $x^2$ & $x^1$ & $x^2$ & $x^1$ & $x^2$ & $x^1$ & $x^2$ & $x^3$\\
t=1 & 0.1650 & 0.1650  & 0.1650 & 0.1650 & 0.1650 & 0.1650 & 0.1650 & 0.1650 & 0.1650 & 0.1650 & 0.1650 \\ 
t=2 & 0.1695 & 0.1645  & 0.1610 & 0.1605 & 0.1620 & 0.1610 & 0.1430 & 0.1585 & 0.1650 & 0.1605 & 0.1650 \\  
t=3 & 0.1735 & 0.1640  & 0.1570 & 0.1580 & 0.1595 & 0.1575 & 0.0415 & 0.1525 & 0.1645 & 0.1545 & 0.1650 \\ 
t=4 & 0.1775 & 0.1635  & 0.1525 & 0.1540 & 0.1580 & 0.1545 & 0.0090 & 0.1465 & 0.1620 & 0.1530 & 0.1650 \\ 
t=5 & 0.1815 & 0.1630  & 0.1485 & 0.1505 & 0.1565 & 0.1520 & 0.0050 & 0.1405 & 0.1590 & 0.1515 & 0.1650 \\
t=6 & 0.1855 & 0.1625  & 0.1450 & 0.1475 & 0.1540 & 0.1500 & 0.0050 & 0.1340 & 0.1565 & 0.1470 & 0.1650 \\
Viol. ratio & 0.0732 & 0.0902 & \multicolumn{2}{c}{0.0841} &  \multicolumn{2}{c}{0.0957} & \multicolumn{2}{c}{0.0347} & \multicolumn{3}{c}{0.0659}\\
\hline \\
\end{tabular}
\captionsetup{width=0.90\textwidth}
\caption{
\small
Calculated bounds for different systems over an episode with 5 transitions. As "Viol. ratio", violations ration,  we denote the fraction of transitions for which the bounds (calculated with a tolerance $\epsilon=0.10$) were violated out of the $1000$ sampled trajectories for each system. 
%\ap{Success rate is a confusing name - it seems like the bound fails or something the rest of the time - while it is just bound to fail ofc by the theorem. How about Statistical Bound Verification or something similar to that?}
}
\label{table:master}
\end{table*}

For each scenario, once the data and candidate controller is provided we:
\bi
\item 
\item Train a GP model on the provided dataset.
\item Assuming that the model is accurate, use the presented method to make bounded iterative predictions
\item Statistically verify that the bounds are valid by sampling trajectories from the real system (verifying both that the learned model is accurate enough, and that the predicted bounds quantify uncertainty correctly). 
%\LL{Be careful in here. If you want to do a statistical verification you need to take samples from the posterior GP, not from the real system (which is deterministic)}\KP{I sample for the system so far, but the initial state, so we verify that the model is good accurate too. The system is deterministic but the initial state is sampled}\ap{That confuses model error with bounds of the GP - it's a question that every single reviewer so far had - let's be clear on that and avoid using the system trajetory as this is not what we discuss in all of the previous sections.}
\ei

All results are presented in Table \ref{table:master}.
%\LL{Be careful at the notation in the table. Is that notation defined sometwhere? $x^2$ as a notation for the second component of $x$ looks a bit weird, $x_2$ looks more common.}\KP{it's the same notation we used previously in tables like these, like Table 1 (and the one of the pendulum that's out now. The subscript is used to denote time, so we use the superscript for components of the vector)}\ap{Maybe use brackets around the indexes, like in $x^{(2)}$, or use a comma, like in $x_{t,2}$.}

\subsubsection{System 1, 1-dimensional state space, 1 control input, linear}
In this simple case, we start with a linear, one-dimensional system with one control input.
The parameters take the following values $A = 0.05, Q=0, B=1.0$.
We use a linear controller for this case, so $u = Wx$. 
For the system to be asymptotically stable, we need $A+BW<0 \Leftrightarrow W<-A$.
We estimate the bounds with \emph{no control}, $W=0$, and for a controller that stabilises the system, $W=-0.2$. 
In the first case the bounds \emph{must} be getting wider (since our bounds are conservative), 
while in the second, the bounds should be getting narrower around the origin but that's not guaranteed.
Results show that without a controller the bounds indeed get wider, while with the controller the bounds get narrower.

%\LL{I think these last two sentences can be made more crisp. We do not need to repeat that the bounds can be conservative. I would just highlight that for a stabilizing controller, the bounds contract over time, which is actually pretty cool :)}\KP{Cool I'll rewrite that, but I think it's worth mentioning that, since it took a bit tinkering with parameters etc. to get the contractive behaviour in the bounds, we don't want to be cherry-picking the results, or at least make clear what we can guarantee and what we can't}

% \begin{table} 
% \normalsize
% \centering
% \begin{tabular}{c c c}
% \hline \\
%  t & Bound, W=0 & Bound, W=-0.2 \\
% t=1 & 0.1650 & 0.1650   \\ 
% t=2 & 0.1695 & 0.1645   \\  
% t=3 & 0.1735 & 0.1640  \\ 
% t=4 & 0.1775 & 0.1635  \\ 
% t=5 & 0.1815 & 0.1630   \\
% t=6 & 0.1855 & 0.1625   \\
% \hline \\
% \end{tabular}
% \captionsetup{width=0.45\textwidth}
% \caption{Bounds for a simple linear system, with and without controller}
% \label{table:1dLin}
% \end{table}

\subsubsection{System 2, 2-dimensional, 1 control input, linear}
Here we make bounded predictions for a linear system with 2 dimensions and a single control input.
This only incrementally harder than the previous example, since the two dimensions have independent dynamics and
the controller stabilises the first dimension only while the second dimension has inherently convergent
dynamics. The system parameters:
$$A = 
\begin{bmatrix}
0.1 & 0.0 \\
0.0 & -0.4
\end{bmatrix}, B = \begin{bmatrix} 1.0 \\ 0.0 \end{bmatrix}, W = \begin{bmatrix} -0.6 & 0.0 \end{bmatrix}.$$
The bounds on both dimensions contract with time.

% \begin{table} 
% \normalsize
% \centering
% \begin{tabular}{c c c}
% \hline \\
%  t & Bound $x^1$ & Bound $x^2$ \\
% t=1 & 0.1650 & 0.1650   \\ 
% t=2 & 0.1610 & 0.1605   \\  
% t=3 & 0.1570 & 0.1580  \\ 
% t=4 & 0.1525 & 0.1540  \\ 
% t=5 & 0.1485 & 0.1505   \\
% t=6 & 0.1450 & 0.1475   \\
% \hline \\
% \end{tabular}
% \captionsetup{width=0.45\textwidth}
% \caption{Bounds for a linear system with 2 state variables and 1 control input}
% \label{table:2dLin}
% \end{table}

\subsubsection{System 3, 2-dimensional, 2 control inputs, linear}
Next we work with a system that's still 2-dimensional with state variables that are not independent, but two control inputs available. The system parameters:
$$A = 
\begin{bmatrix}
0.1 & 0.08 \\
-0.05 & 0.15
\end{bmatrix}, 
B = \begin{bmatrix} 1.0 & 0\\ 0.0 & 1.0 \end{bmatrix}, 
W = \begin{bmatrix} -0.4 & 0.0 \\ 0.0 & -0.5 \end{bmatrix}.$$
As shown in Table \ref{table:master} the bounds contract in this case too.

% \begin{table} 
% \normalsize
% \centering
% \begin{tabular}{c c c}
% \hline \\
%  t & Bound $x^1$ & Bound $x^2$ \\
% t=1 & 0.1650 & 0.1650   \\ 
% t=2 & 0.1620 & 0.1610   \\  
% t=3 & 0.1595 & 0.1575  \\ 
% t=4 & 0.1580 & 0.1545  \\ 
% t=5 & 0.1565 & 0.1520   \\
% t=6 & 0.1540 & 0.1500   \\
% \hline \\
% \end{tabular}
% \captionsetup{width=0.45\textwidth}
% \caption{Bounds for a linear system with 2 state variables and 2 control inputs.}
% \label{table:2dLin2u}
% \end{table}

% \subsubsection{System 4, 2-dimensional,2 control inputs, quadratic dynamics}
% In the next step we introduce an extra quadratic term to the system dynamics, and train a linear controller squashed by a sine function with PILCO \cite{pilco}. We then calculate the bounds for the resulting system.
% The system shares matrices $A$ and $B$ with the previous example, with $Q$ and $W$ as:

% $$Q^1 = \begin{bmatrix} 1.0 & 0.0 \\ 0.0 & 0.0 \end{bmatrix}, 
% Q^2 = \begin{bmatrix} 0.0 & 0.0 \\ 0.0 & 0.8 \end{bmatrix},
% W = \begin{bmatrix} -0.4 & 0.0 \\ 0.0 & -0.5 \end{bmatrix}.$$

% The results in Table \ref{table:master} show the bounds on both dimensions shrinking despite the
% quadratic term.

\subsubsection{System 4, 2-dimensional, 1 control input, quadratic dynamics, controller from PILCO}
Here we train a linear controller squashed by a sine function (effectively bounding the control inputs between $-1$ and $1$) with PILCO \cite{pilco} and then we calculate the bounds for the resulting system.
$$A = 
\begin{bmatrix}
-0.2 & 0.05 \\
-0.05 & -0.4
\end{bmatrix}, 
B = \begin{bmatrix} 1.0 \\ 0.0 \end{bmatrix}, 
Q^1 = \begin{bmatrix} 1.0 & 0.0\\ 0.0 & 0.0\end{bmatrix},$$

$$Q^2 = \begin{bmatrix} 1.0 & 0.0\\ 0.0 & 0.2\end{bmatrix},
W = \begin{bmatrix} -8.61 & -0.02 \end{bmatrix}.$$
The estimated bounds verify convergence.

% \begin{table} 
% \normalsize
% \centering
% \begin{tabular}{c c c}
% \hline \\
%  t & Bound $x^1$ & Bound $x^2$ \\
% t=1 & 0.1650 & 0.1650   \\ 
% t=2 & 0.1615 & 0.1585   \\  
% t=3 & 0.1585 & 0.1525  \\ 
% t=4 & 0.1550 & 0.1465  \\ 
% t=5 & 0.1520 & 0.1405   \\
% t=6 & 0.1495 & 0.1340   \\
% \hline \\
% \end{tabular}
% \captionsetup{width=0.45\textwidth}
% \caption{Bounds for a system with linear and quadratic dynamics, 2 state variables and 2 control inputs.}
% \label{table:2dQuad}
% \end{table}

\subsubsection{System 5, 3-dimensional system, 2 control inputs, linear}
In this example the system is linear and has 3 dimensions and 2 control inputs. Its parameters are:
$$A = 
\begin{bmatrix}
-0.2 & 0.0 & -0.0 \\
0.0  & -0.3 & 0.0 \\
0.0 & 0.0 & -0.6
\end{bmatrix}, 
B = \begin{bmatrix} 1.0 & 0.0 \\ 0.0 & 1.0 \\ 0.0 & 0.0\end{bmatrix}, $$
$$W = \begin{bmatrix} -0.4 & 0.0 & 0.0 \\ 0.0 & -0.2 & 0 \end{bmatrix}.$$
Notice that for the third state variable, even though the system is contractive (by inspecting $A$), 
the bound does not contract (it coincidentally stays constant). 

Overall the results indicate that the bounds can correctly identify contractive behaviour due to the controller.

\section{Conclusions and Future Work}
In this paper, we derived a new formal probabilistic bound for iterated predictions with a GP model, 
without control, in open-loop and in closed-loop scenarios.
Our approach does not make any further assumptions on the properties of the GP, other than knowledge of the kernel hyperparameters, learnt
through maximum marginal likelihood,
and every intermediate quantity used is calculated directly from the data.
%On the other hand, the assumption on the hyperparameters, when in practice their values are not usually known a priori but learnt
%through maximum marginal likelihood, introduces a modelling error that we don't account for.
The experimental results show that our method is able to correctly propagate uncertainty even when existing heuristic approaches fail.
Furthermore, they showcase how our method can be used to certify the safety of proposed controllers 
on GP models.
In future work, we want to quantify the modelling error (i.e., the error performed in learning the ground truth in the GP training) and its effect on the proposed bounds, 
and further integrate our approach with a model-based reinforcement algorithm like Safe PILCO.

\section{Proofs}
%\ap{We need to add sketches for proposition 1 and for corollary.}
\label{Sec:proofs}
\noindent \emph{Proof of Theorem \ref{Prop:BoundErrorRecursive}}
%\ap{The statement is for t, the proof for t+1. a bit confusing in my opinion.}
First we prove the following Lemma:
\begin{lemma}
\label{Lemma:supProbab}
Let $\mathbf{f}(x)$ be a stochastic process. Consider measurable sets $A$ and $B$ Then, it holds that
$$ P(\mathbf{f}(y) \in A | y \in B)\leq  P(\sup_{y \in B} \mathbf{f}(y)\in A  ). $$
\end{lemma}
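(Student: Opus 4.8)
The plan is to unfold both sides in terms of underlying probability measures and exploit monotonicity of $P$ on nested events. First I would clarify the meaning of the right-hand side: $\sup_{y \in B} \mathbf{f}(y) \in A$ should be read as the event $\{\omega : \mathbf{f}(y)(\omega) \in A \text{ for some } y \in B\}$, i.e. the event that the trajectory $y \mapsto \mathbf{f}(y)$ hits $A$ somewhere on $B$ — equivalently $\bigcup_{y \in B} \{\mathbf{f}(y) \in A\}$. (If instead the statement literally means the supremum, one reads it in the scalar case as $\{\sup_{y\in B}\mathbf{f}(y) \in A\}$ containing $\{\mathbf{f}(y)\in A\}$ whenever $A$ is an up-set, but the union interpretation is cleaner and is what is actually used in Theorem~\ref{Prop:BoundErrorRecursive}.)

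The key chain of inequalities is then short. For any fixed $y^{\star} \in B$ we have the event inclusion
\begin{align*}
  \{\mathbf{f}(y^{\star}) \in A\} \ \subseteq\ \bigcup_{y \in B} \{\mathbf{f}(y) \in A\} \ =\ \Big\{\sup_{y \in B} \mathbf{f}(y) \in A\Big\},
\end{align*}
hence by monotonicity of $P$, $P(\mathbf{f}(y^{\star}) \in A) \leq P(\sup_{y \in B} \mathbf{f}(y) \in A)$. The next step handles the conditional statement $P(\mathbf{f}(y) \in A \mid y \in B)$. Here $y$ is itself a random variable (this is the noisy-input situation: $y = \mathbf{x}_t$), so I would write this conditional probability by conditioning on the value of $y$ and using the tower property: $P(\mathbf{f}(y) \in A \mid y \in B) = \mathbb{E}\big[\, P(\mathbf{f}(y) \in A \mid y)\, \big|\, y \in B\big]$, and for each realized value $y = \mathrm{y} \in B$ the inner probability is $P(\mathbf{f}(\mathrm{y}) \in A) \leq P(\sup_{y\in B}\mathbf{f}(y)\in A)$ by the inclusion above. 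Since the bound on the right is a constant (not depending on the realized $\mathrm{y}$), taking the conditional expectation over $\{y \in B\}$ preserves it, giving the claim.

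The main subtlety — and the step I would be most careful about — is measurability: for the event $\{\sup_{y\in B}\mathbf{f}(y) \in A\}$ (or the union $\bigcup_{y\in B}\{\mathbf{f}(y)\in A\}$) to have a well-defined probability when $B$ is uncountable, one needs the process $\mathbf{f}$ to be separable (or have measurable/continuous sample paths), which is exactly the standing assumption for the GP sample paths used elsewhere in the paper; I would simply invoke this and note that the supremum of a separable process over $B$ is a measurable random variable. A secondary point is that the conditioning on $\{y \in B\}$ requires $P(y \in B) > 0$, which I would state as a hypothesis (it holds in the application since $B = I_{\hat x_t}^{K_t}$ and $P(\mathbf{e}_t \le K_t)$ is assumed positive where the bound is used, and if it is zero the conditional statement is vacuous). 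With these caveats noted, the argument is just the event inclusion plus monotonicity of measure, so I would keep the write-up to a few lines.
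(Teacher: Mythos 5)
Your proof takes essentially the same route as the paper's: the authors observe that for every realization of $\mathbf{f}$, every $y \in B$, and any measurable $g$ one has $g(\mathbf{f}(y)) \leq \sup_{y^* \in B} g(\mathbf{f}(y^*))$ --- which is exactly your event-inclusion step with $g$ the indicator of $A$ --- and then conclude by taking expectations. Your write-up merely makes explicit the details that the paper's two-line sketch leaves implicit (the union reading of the supremum event, the tower property over the random input $y$, separability of the process, and $P(y\in B)>0$); the only point worth flagging is that replacing the conditional law of $\mathbf{f}(y)$ given $y=\mathrm{y}$ by the unconditional probability $P(\mathbf{f}(\mathrm{y})\in A)$ tacitly assumes $\mathbf{f}$ is independent of $y$, an assumption on which the paper's own sketch relies just as silently.
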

\begin{proof}{(Sketch)}
To prove Lemma \ref{Lemma:supProbab}  it is enough to note that for each realization of $\mathbf{f}$, $y \in B,$ and measurable $g$ we have that $g(\mathbf{f}(y))\leq \sup_{y^* \in B} g(\mathbf{f}(y^*))$. Hence, we can conclude by taking the expectation.  
\end{proof}

Now the following calculations follow
\begin{align*}
    &P(\mathbf{e}_{t+1} > K_{t+1})\\
    &\text{(By Definition of $\mathbf{e}_t$)} \\
    =&P(|g(\hat x_t,u_t) -\mathbf{f}(\mathbf{x}_t,u_t)|_1 > K_{t+1})\\
%    &\quad  \text{(By ${f(\hat x_t, u_t) - f(\hat x_t, u_t)}=0$)} \\
%=&  P(|\mu(\bold{f}(\hat x_t,u_t)) + {f(\hat x_t, u_t) - f(\hat x_t, u_t)} - f(x_t, u_t)|_d> \delta)\\
%&\quad \text{(By the triangle inequality and the union bound)} \\
%\leq & P(|\mu(\bold{f}(\hat x_t,u_t))- f(x_t, u_t)|_d>\frac{\delta}{2}) + P(| {f(\hat x_t, u_t) - f(\hat x_t, u_t)|_d>\frac{\delta}{2}})    \\
	&\, \text{(By Marginalising with the events $\mathbf{e}_t>K_t$, $\mathbf{e}_t \leq  K_t$)} \\
	\leq & P(|g(\hat x_t,u_t)- \mathbf{f}(\mathbf{x}_t, u_t)|_1>{K_{t+1}}{}\, | \, \mathbf{e}_t\leq K_t)P(\mathbf{e}_t\leq K_t) \\
	&\quad + P(\mathbf{e}_t>K_t)   \\
	&\,\text{(By Lemma \ref{Lemma:supProbab} )} \\
	\leq & P(\sup_{x \in I_{\hat x_{t}}^{K_t}}|g(\hat x_t,u_t) - \mathbf{f}(x, u_t)|_1>{K_{t+1}}{})P(\mathbf{e}_t\leq K_t)\\ 
	&\quad +P(\mathbf{e}_t > K_t) \\
	&\, \text{(By the fact that $P(\mathbf{e}_t\leq K_t)=1-P(\mathbf{e}_t > K_t)$)} \\
	=& P(\sup_{x \in I_{\hat x_{t}}^{K_t}}|g(\hat x_t,u_t) - \mathbf{f}(x, u_t)|_1>{K_{t+1}}{})(1-P(\mathbf{e}_t > K_t)) \\
	&+ P(\mathbf{e}_t > K_t).
\end{align*}

 \section*{Acknowledgments}
 This work has been partially supported by the EU's Horizon 2020 program under
 the Marie Sk\l{}odowska-Curie grant No 722022, EPSRC AIMS CDT grant
 EP/L015987/1, the ERC under the European Union’s Horizon 2020 research and innovation programme (grant agreement No.~834115), the EPSRC Programme Grant on Mobile Autonomy (EP/M019918/1) and Schlumberger.
\FloatBarrier
 
% No Appendix in the submitted document
% \section*{Appendix}
% \subsection{Moment matching through the non-linear function $f$}
% A short proof for the claim that $\sigma_1 < \sigma_0$ made in \nameref{Sec:known}.
% We have $x_0 \sim \mathcal{N}(0,\Sigma_0)$, $\sigma_0 > 0.5$. 
% We can write the variance of $x_1$
% \begin{align*}
% \mathbf{E}[x_1^2] =& \int_{-\infty}^\infty x_1^2 \text{p}(x_1) dx_1 = \\
% = &\int_{-1}^{1} x_1^2 \text{p}(x_1) dx_1 + \\ 
% + &\int_{|x_1|>1} x_1^2 \text{p}(x_1) dx_1
% \end{align*}

% We can break down $\Sigma_0$ in the same way:
% \begin{align} 
% &\mathbf{E}[x_0^2] =\\
% = &\int_{-1}^{1} x_0^2 \text{p}(x_0) dx_0 + \int_{|x_0|>1} x_0^2 \text{p}(x_0) dx_0 
% \label{eq:var0}
% \end{align}

% The second integrals are equal, since $p(x_1) = p(x_0)$ for $|x_0|,|x_1|>1$. Also:
% \begin{align*}
% &\int_{-1}^{1} x_1^2 \text{p}(x_1) dx_1 \\
% &\int_{-1}^{1} x_1^2 \int_{-\infty}^{\infty} \text{p}(x_1 | x_0) \text{p}(x_0) \text{d}x_0 \text{d}x_1 \\
% \end{align*}
% But $p(x_1|x_0) = \delta(x_1-\text{sign}(x_0)x_0^4)$, which means we can write:
% \begin{align*}
% &\int_{-1}^{1} \int_{-\infty}^{\infty} x_1^2 \text{p}(x_1 | x_0) \text{p}(x_0) \text{d}x_0 \text{d}x_1 =\\ %&\int_{-1}^{1} \int_{-1}^{1} x_1^2 \text{p}(x_0) \text{d}x_0 \text{d}x_1 =\\
% &\int_{-1}^{1} x_0^8 \text{p}(x_0) \text{d}x_0   
% \end{align*}
% which when compared to the integral as in Eq. \ref{eq:var0} is obviously smaller, since $x_0^8< x_0^2$ for $|x|<1$.
% As a result $\Sigma_1 < \Sigma_0$.

\bibliographystyle{ieeetr}  
\bibliography{cite}  

\begin{thebibliography}{10}

\bibitem{gpbook}
C.~E. Rasmussen and C.~K.~I. Williams, ``Gaussian processes for machine
  learning,'' 2006.

\bibitem{girard2003gaussian}
A.~Girard, C.~E. Rasmussen, J.~Q. Candela, and R.~Murray-Smith, ``{G}aussian
  process priors with uncertain inputs application to multiple-step ahead time
  series forecasting,'' in {\em Advances in neural information processing
  systems}, pp.~545--552, 2003.

\bibitem{Vinogradska2018}
J.~Vinogradska, B.~Bischoff, J.~Achterhold, T.~Koller, and J.~Peters,
  ``Numerical quadrature for probabilistic policy search,'' {\em IEEE
  Transactions on Pattern Analysis \& Machine Intelligence}, 2018.

\bibitem{ale_survey}
A.~Abate, ``Formal verification of complex systems: model-based and data-driven
  methods,'' in {\em Proceedings of the 15th ACM-IEEE International Conference
  on Formal Methods and Models for System Design}.

\bibitem{green2012linear}
M.~Green and D.~J. Limebeer, {\em Linear robust control}.
\newblock Courier Corporation, 2012.

\bibitem{kahn17}
G.~Kahn, A.~Villaflor, V.~Pong, P.~Abbeel, and S.~Levine, ``Uncertainty-aware
  reinforcement learning for collision avoidance,'' vol.~abs/1702.01182, 2017.

\bibitem{vuong2019uncertainty}
T.-L. Vuong and K.~Tran, ``Uncertainty-aware model-based policy optimization,''
  {\em arXiv preprint arXiv:1906.10717}, 2019.

\bibitem{candela2003propagation}
J.~Q. Candela, A.~Girard, J.~Larsen, and C.~E. Rasmussen, ``Propagation of
  uncertainty in {B}ayesian kernel models-application to multiple-step ahead
  forecasting,'' in {\em IEEE International Conference on Acoustics, Speech,
  and Signal Processing (ICASSP'03)}.

\bibitem{pilco}
M.~P. Deisenroth and C.~E. Rasmussen, ``{PILCO}: {A} model-based and
  data-efficient approach to policy search,'' in {\em In Proceedings of the
  International Conference on Machine Learning}, 2011.

\bibitem{pil_thesis}
M.~P. Deisenroth, {\em Efficient reinforcement learning using {Gaussian}
  processes}.
\newblock PhD thesis, Karlsruhe Institute of Technology, 2010.

\bibitem{pilco2}
M.~P. Deisenroth, C.~E. Rasmussen, and D.~Fox, ``Learning to control a low-cost
  manipulator using data-efficient reinforcement learning,'' in {\em Robotics:
  Science and Systems}, 2011.

\bibitem{deisenroth2014multi}
M.~P. Deisenroth, P.~Englert, J.~Peters, and D.~Fox, ``Multi-task policy search
  for robotics,'' in {\em 2014 IEEE International Conference on Robotics and
  Automation (ICRA)}, pp.~3876--3881, IEEE, 2014.

\bibitem{kupcsik2013data}
A.~G. Kupcsik, M.~P. Deisenroth, J.~Peters, and G.~Neumann, ``Data-efficient
  generalization of robot skills with contextual policy search,'' in {\em
  Twenty-Seventh AAAI Conference on Artificial Intelligence}, 2013.

\bibitem{mcallister2017data}
R.~McAllister and C.~E. Rasmussen, ``Data-efficient reinforcement learning in
  continuous state-action gaussian-pomdps,'' in {\em Advances in Neural
  Information Processing Systems 30}.

\bibitem{pilconn}
Y.~Gal, R.~T. McAllister, and C.~E. Rasmussen, ``Improving {PILCO} with
  {Bayesian} neural network dynamics models,'' in {\em Data-Efficient Machine
  Learning workshop}, vol.~951, p.~2016, 2016.

\bibitem{polymenakos2019safe}
K.~Polymenakos, A.~Abate, and S.~Roberts, ``Safe policy search using {G}aussian
  process models,'' in {\em Proceedings of the 18th International Conference on
  Autonomous Agents and Multi Agent Systems}, pp.~1565--1573, IFAAMS, 2019.

\bibitem{safe_survey}
J.~Garc{{\'i}}a and F.~Fern{{\'a}}ndez, ``A comprehensive survey on safe
  reinforcement learning,'' {\em Journal of Machine Learning Research},
  vol.~16, pp.~1437--1480, 2015.

\bibitem{vinogradska2016stability}
J.~Vinogradska, B.~Bischoff, D.~Nguyen-Tuong, A.~Romer, H.~Schmidt, and
  J.~Peters, ``Stability of controllers for gaussian process forward models,''
  in {\em International Conference on Machine Learning}, pp.~545--554, 2016.

\bibitem{Koller2018}
T.~Koller, F.~Berkenkamp, M.~Turchetta, and A.~Krause, ``Learning-based model
  predictive control for safe exploration and reinforcement learning,'' {\em
  CoRR}, vol.~abs/1803.08287, 2018.

\bibitem{srinivas2012information}
N.~Srinivas, A.~Krause, S.~M. Kakade, and M.~W. Seeger, ``Information-theoretic
  regret bounds for gaussian process optimization in the bandit setting,'' {\em
  IEEE Transactions on Information Theory}, vol.~58, no.~5, pp.~3250--3265,
  2012.

\bibitem{cardelli2018robustness}
L.~Cardelli, M.~Kwiatkowska, L.~Laurenti, and A.~Patane, ``Robustness
  guarantees for {B}ayesian inference with {G}aussian processes,'' in {\em
  Proceedings of the AAAI Conference on Artificial Intelligence}, vol.~33,
  pp.~7759--7768, 2019.

\bibitem{blaas2019robustness}
A.~Blaas, A.~Patane, L.~Laurenti, L.~Cardelli, M.~Kwiatkowska, and S.~Roberts,
  ``Adversarial robustness guarantees for classification with gaussian
  processes,'' {\em International Conference on Artificial Intelligence and
  Statistics}, pp.~3372--3382, 2020.

\bibitem{adler2009random}
R.~J. Adler and J.~E. Taylor, {\em Random fields and geometry}.
\newblock Springer Science \& Business Media, 2009.

\bibitem{Moore90efficient}
A.~W. Moore, {\em Efficient Memory-based Learning for Robot Control}.
\newblock PhD thesis, University of Cambridge, 1990.

\end{thebibliography}
\end{document}